\documentclass{article} 
\PassOptionsToPackage{table}{xcolor}
\usepackage{iclr2027_conference,times}
\iclrfinalcopy

\usepackage{amsmath,amsfonts,bm}

\def\eqref#1{equation~\ref{#1}}

\def\1{\bm{1}}

\DeclareMathAlphabet{\mathsfit}{\encodingdefault}{\sfdefault}{m}{sl}
\SetMathAlphabet{\mathsfit}{bold}{\encodingdefault}{\sfdefault}{bx}{n}

\usepackage{hyperref}
\usepackage{amsmath,amssymb,amsthm,booktabs,graphicx,url,multirow,enumitem}
\usepackage{algorithm,algorithmic}
\newtheorem{proposition}{Proposition}

\theoremstyle{remark}

\usepackage{pifont}
\usepackage{iftex}
\ifXeTeX
  \usepackage{fontspec}
\fi

\title{Volatility-Clustering Adaptation for Financial Time Series}

\author{Manh Nguyen, Minh Hoang Nguyen, Huu Hiep Nguyen, Van Dai Do, Hung Le\thanks{Corresponding Author}\\
Applied Artificial Intelligence Initiative, Deakin University, Australia\\
\texttt{\{manh.nguyen, s223669184, huu.n, v.do, thai.le\}@deakin.edu.au}
}

\begin{document}
\maketitle
\begin{abstract}
Time-series foundation models are increasingly adapted to new domains through fine-tuning on target data, under the implicit assumption that more target data yields better forecasts. We show that this assumption can fail in financial forecasting, where individual price changes are difficult to predict, but large moves tend to cluster, creating alternating calm and turbulent periods. Using financial foundation models trained on price bars of open, high, low, close, and volume, we argue that adapting to financial domains requires training signals beyond next-token prediction.
We introduce \textbf{Volatility-Clustering Adaptation (VCA)}, which augments next-token cross-entropy with a differentiable penalty on the autocorrelation of squared returns, the standard statistical signature of volatility clustering. This additional objective provides a multi-step training signal by matching the resulting dependence structure of autoregressive rollouts to those of the realized future.
Across three asset sets and two evaluation conventions, VCA improves adaptation over the pre-trained model, with the strongest gains under the primary evaluation (\textsc{fore}), driven primarily by reduced variance error.
Overall, our results suggest that effective financial adaptation requires objectives that capture domain-specific temporal structure beyond token-level prediction.
Code is publicly available at \url{https://github.com/DA2I2-SLM/VCA}.
\end{abstract}

\section{Introduction}
\label{sec:intro}

Time-series foundation models are pre-trained on large and diverse corpora and then applied to previously unseen time series \citep{lagllama2023, timesfm2024,moirai2024,moment2024,chronos2024}. Financial foundation models such as Kronos
\citep{kronos2025} extend this paradigm to finance by tokenizing open, high, low, close, and volume (OHLCV) bars and autoregressively
generating future price paths.
Adapting such models to a target market typically involves fine-tuning on historical data, with the implicit assumption that using more target data should improve performance. That assumption can fail because fine-tuning can distort useful pre-trained features and reduce out-of-distribution robustness \citep{kumar2022finetune}, or overwrite capabilities acquired during pre-training \citep{kirkpatrick2017ewc}. The problem is particularly relevant in finance, where returns, defined as relative changes in asset prices, are weakly predictable but volatility is strongly autocorrelated and varies across regimes \citep{engle1982arch,bollerslev1986garch,cont2001stylized}. Long historical windows can therefore mix very different volatility regimes, so adding more data may introduce patterns that are less relevant to the regime at deployment.

This raises two separate questions for adaptation: \emph{what structure the training objective should enforce}, and, secondarily, \emph{which data to fine-tune on}. For the training objective, we encode volatility clustering, a classical stylized fact of financial returns, as an auxiliary signal, since a next-token prediction loss gives no explicit signal about the shape of a multi-step rollout used for test-time evaluation. In particular, we introduce \textbf{Volatility-Clustering Adaptation (VCA)}, which matches the \emph{autocorrelation function of squared returns (ACF$^2$)} between autoregressive rollouts and the corresponding realized future paths (Figure~\ref{fig:overview}). This directly trains the model to reproduce volatility clustering over the evaluation horizon.
To make this objective differentiable, we use a Gumbel-softmax straight-through rollout \citep{jang2017gumbel, bengio2013straightthrough}.
Additionally, we compute ACF statistics at the batch level rather than from individual trajectories, providing a more stable training signal. This formulation brings stylized-fact-driven objectives, analogous to physics-informed learning ~\citep{raissi2019pinn, wiese2020quantgans}, into the adaptation of an existing financial foundation model rather than training one from scratch.
Regarding data selection, we ask a retrospective question: does fine-tuning on a chronological slice whose realized volatility matches the test period outperform fine-tuning on the full historical set? We hypothesize that long historical windows mix volatility regimes that may differ from the one at deployment, so a smaller matched slice could supply more relevant signal than the full set. This setup examines whether regime alignment drives adaptation gains beyond data volume. To verify, we use the realized volatility of the test period for matching, making this a diagnostic study of volatility-regime alignment.

\begin{figure}[t]
\centering
\includegraphics[width=\textwidth]{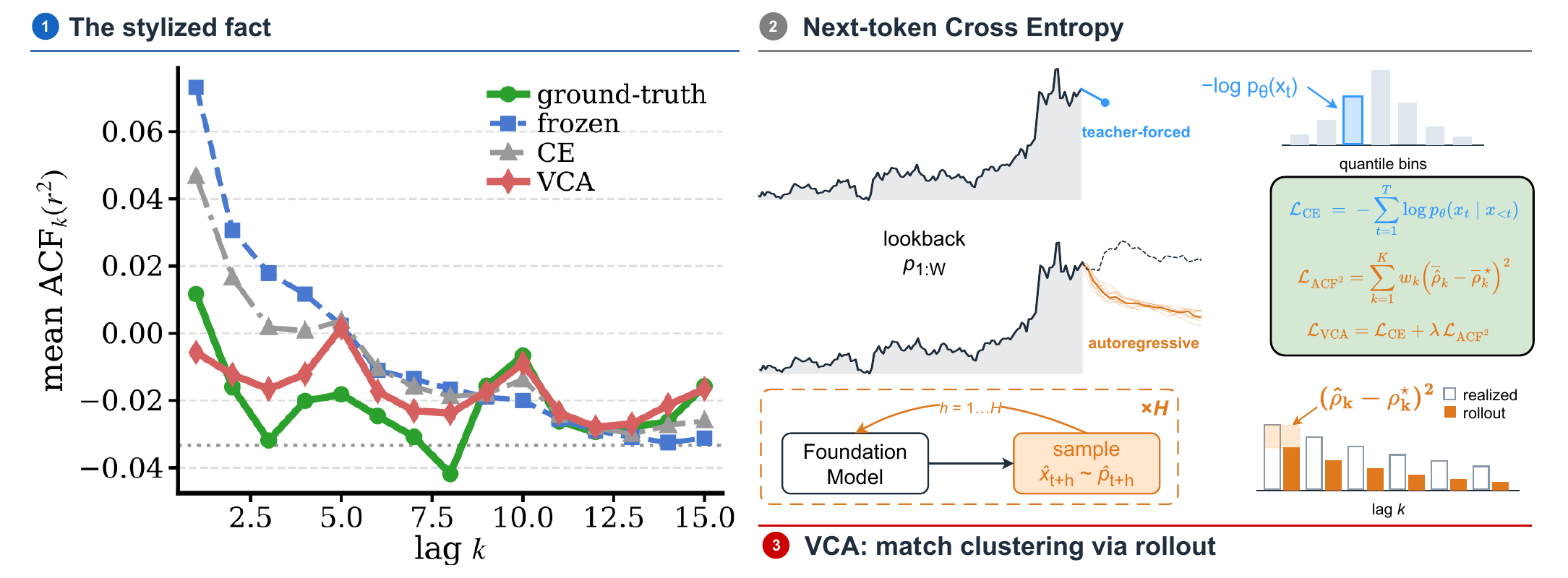}
\caption{Overview of Volatility-Clustering Adaptation (VCA). \textbf{\textcolor{blue}{\ding{202}}} Mean ACF of squared returns against lag $k$ on CSI300: the realized (ground-truth) path stays positive and well above the dotted zero-clustering null, the signature of volatility clustering. The frozen model and CE fine-tuning deviate from the ground-truth curve, while VCA matches its shape more closely. \textbf{\textcolor{gray}{\ding{203}} } The baseline objective $\mathcal{L}_{\text{CE}}$ is single-step and teacher-forced: it scores the next token against the ground truth but gives no signal about the shape of the multi-step path the model generates at test time. \textbf{\textcolor{red}{\ding{204}} } VCA rolls out autoregressively over the horizon and adds $\mathcal{L}_{\text{ACF}^2}$, which matches the \emph{batch-mean} ACF of squared returns $\bar{\rho}_k$ to the realized $\bar{\rho}_k^{\star}$. VCA's total objective is $\mathcal{L}_{\text{VCA}}=\mathcal{L}_{\text{CE}}+\lambda\mathcal{L}_{\text{ACF}^2}$.}
\label{fig:overview}
\end{figure}

Across major financial benchmarks such as CSI300, CSI500, and Crypto, VCA improves adaptation under both evaluation conventions, with gains driven primarily by lower variance error. In contrast, cross-entropy fine-tuning can be harmful, particularly on Crypto, while full-data VCA causes greater weight drift from the frozen backbone, which we use as a proxy for potential forgetting. Regime-matched historical data improves data efficiency at longer forecasting horizons, matching or outperforming full-data fine-tuning with less weight drift, but this benefit does not extend to shorter horizons. We observe similar results when adapting Chronos-T5-small, suggesting that these challenges are not specific to Kronos, highlighting the value of domain-specific path-level objectives and horizon-aware data selection for financial foundation-model adaptation.

Our contributions are:
\begin{itemize}[leftmargin=*,itemsep=1pt,topsep=2pt]

\item We introduce \textbf{Volatility-Clustering Adaptation (VCA)}, a differentiable objective that matches volatility clustering in autoregressive rollouts, with a controlled ablation isolating the contribution of ACF$^2$ from that of rollout-based training itself.

\item We show that volatility-matched data can match or outperform full-data fine-tuning at the longer horizon with substantially less weight drift, while broader data coverage remains more important at the shorter horizon.

\item We identify a volatility-trend direction that the ACF$^2$ loss cannot constrain and confirm it in both simulation and empirical results, providing a partial explanation for VCA's instability.

\end{itemize}

\section{Related Work}
\label{sec:related}

\paragraph{Time-series foundation models.}
Lag-Llama \citep{lagllama2023}, TimesFM \citep{timesfm2024}, Moirai \citep{moirai2024}, MOMENT \citep{moment2024}, and Chronos \citep{chronos2024} are pre-trained on large heterogeneous corpora and applied zero-shot across domains, using the same weights without modeling domain-specific structure. Kronos \citep{kronos2025} instead targets finance, tokenizing OHLCV bars into hierarchical discrete tokens and pre-training a decoder-only transformer on billions of K-lines. All of these works focus on optimizing forecasting accuracy, but none constrain adaptation to preserve domain-specific properties such as volatility clustering.

\paragraph{Adaptation, domain knowledge, and data selection for time series.}
Several approaches have been proposed to adapt time-series foundation models to target domains. Parameter-efficient methods use LoRA~\citep{hu2022lora, gupta2024lowrank} or specialized adaptation schemes such as TRACE~\citep{li2025trace} to update only a small fraction of model parameters. In-context fine-tuning adapts to a target distribution through related time-series examples without updating model weights at inference time~\citep{faw2025incontext}, while online adaptation updates forecasts as new observations arrive~\citep{lee2025lightweight}. Other work moves beyond conventional forecast loss, for example by fine-tuning toward downstream decision objectives~\citep{beichter2025decision}. In contrast, our adaptation changes the training objective itself by encoding a financial stylized fact, volatility clustering, through a differentiable path-level loss. 
Data selection provides a complementary route, with evidence that carefully selected subsets can outperform full-data fine-tuning, while simply matching the training data to the target distribution does not always improve transfer~\citep{xia2024less,kang2024data}.
In finance, DoubleAdapt~\citep{doubleadapt2023} adapts a stock-forecasting model to shifting regimes through a learned adapter. We instead use volatility regime to select historical data and find that such alignment can improve data efficiency at longer forecasting horizons.

\paragraph{Stylized facts in financial modeling.}
Autocorrelation in squared returns, together with weak autocorrelation in raw returns, is a classical financial stylized fact \citep{engle1982arch, bollerslev1986garch, cont2001stylized}. Their strength depends on sampling frequency \citep{drostnijman1993}, motivating our evaluation across forecast horizons. Quant GANs reproduce financial stylized facts directly \citep{wiese2020quantgans}, while physics-informed learning encodes known structure in differentiable objectives \citep{raissi2019pinn}. Exposure bias highlights the gap between teacher-forced training and autoregressive evaluation \citep{bengio2015scheduled, lamb2016professor}, and soft-DTW and DILATE address related gaps with differentiable path-shape losses \citep{cuturi2017softdtw, leguen2019dilate}.  These methods target generic path geometry rather than any domain-specific structure.
To our knowledge, we are the first to use a financial stylized fact as a differentiable path-level objective for adapting a pretrained foundation model.

\section{Method}
\label{sec:method}
We propose \emph{Volatility-Clustering Adaptation (VCA)}, which augments next-token prediction with a multi-step signal for volatility dependence. It matches the squared-return ACF of differentiable rollouts to that of the realized future, while preserving the original CE objective. 
We build on Kronos \citep{kronos2025}, a decoder-only foundation model for financial time series that tokenizes each open, high, low, close, and volume (OHLCV) bar into hierarchical discrete tokens and predicts the next bar autoregressively. Given a lookback window of $W$ bars, Kronos rolls out an $H$-bar forecast by sampling tokens one
step at a time and decoding them back to prices.

\subsection{Preliminaries}
\label{sec:overview}

\begin{figure}[t]
\centering
\includegraphics[width=\textwidth]{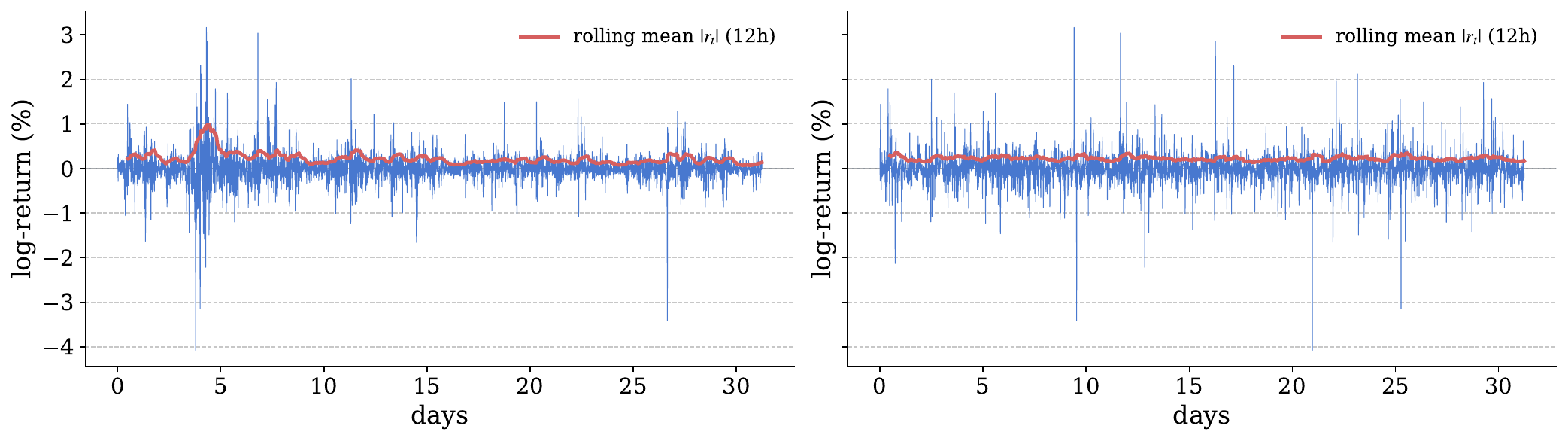}
\caption{Meaning of $\hat\rho_k>0$ on a real trajectory (BTCUSDT, 15-minute bars). 
Blue is the log-return $r_t$; red is the rolling mean $|r_t|$ (12h), a realized-volatility proxy. \textbf{(a)} Real order: volatility moves in sustained stretches, so a large $|r_t|$ predicts a large $|r_{t+k}|$. \textbf{(b)} Same returns, time-shuffled: the returns have the same values and distribution, but once their order is shuffled, the clustering disappears, and the envelope flattens. The ordering impact is 
the structure that $\hat\rho_k$ measures and what Eq.~\ref{eq:acfloss} trains the model to match.}
\label{fig:acf}
\end{figure}

\paragraph{Problem Formulation.}
A training sequence consists of the tokenized $W{+}H$ bars, $x_{1:W+H}$, spanning the lookback window and the forecast horizon. The model $p_\theta$ is trained to predict each token autoregressively, giving the next-token cross-entropy (CE) objective
\begin{equation}
\label{eq:ce}
\mathcal{L}_{\mathrm{CE}} \;=\; -\sum_{t=1}^{W+H}\log p_\theta(x_t\mid x_{<t}).
\end{equation}
This is the standard training objective for autoregressive time-series foundation models, and for financial foundation models such as Kronos in particular.
The CE objective is teacher-forced and one step ahead; that is, it sharpens next-token accuracy, but gives no signal about the shape of the $H$-step rollout that is actually produced and evaluated at test time. We next describe one such shape property, \textit{volatility clustering}, which Eq.~\ref{eq:ce} fails to model. Our method (\S\ref{sec:mainmethod}) explicitly
optimizes for this property.

\paragraph{The volatility-clustering stylized fact.}
Financial returns exhibit \emph{volatility clustering}. The autocorrelation function (ACF) of
\emph{squared} returns decays slowly and stays positive, while raw returns are near white noise
\citep{cont2001stylized}. For a price path $p_{1:H}$, we form log-returns $r_t=\log(p_t/p_{t-1})$ and let $\overline{r^2}$ denote their mean squared return. The sample ACF of squared returns at lag $k$ is
\begin{equation}
\label{eq:rho}
\hat\rho_k \;=\; \frac{\sum_{t}(r_t^2-\overline{r^2})(r_{t-k}^2-\overline{r^2})}
{\sum_{t}(r_t^2-\overline{r^2})^2}, \qquad k=1,\dots,K .
\end{equation}
$K$ introduces a trade-off between capturing longer-range dependence and maintaining reliable estimates. Larger $K$ captures slower-decaying clustering, but longer-lag estimates of $\hat\rho_k$ are noisier and sit closer to the no-clustering null.
Concretely, $\hat\rho_k>0$ means a large move tends to be followed by another large move $k$ steps later, regardless of direction. 
Figure~\ref{fig:acf} illustrates this temporal dependence directly: the same returns exhibit sustained volatility clustering in their original order, but not after time-shuffling.

\subsection{Volatility-Clustering Adaptation}
\label{sec:mainmethod}

Standard fine-tuning adapts foundation models using the objective in Eq.~\ref{eq:ce}. We augment it with a signal for the volatility dependence of the multi-step rollout. 

\paragraph{A differentiable ACF$^2$ loss.}
Computing the ACF loss on generated trajectories requires gradients to propagate through the autoregressive sampling process. Because Kronos predicts discrete tokens, naive sampling would break this gradient path. We therefore replace the categorical draw with a Gumbel-softmax straight-through estimator~\citep{jang2017gumbel, bengio2013straightthrough}. Specifically, letting $\ell_t$ denote the model logits over the token vocabulary, the forward pass feeds the $\arg\max$ token, matching test-time discretization, while the backward pass uses the soft relaxation
\begin{equation}
\label{eq:gumbelst}
q_t \;=\; \mathrm{softmax}\big((\ell_t+g_t)/\tau\big), \qquad g_t\sim\mathrm{Gumbel}(0,1),
\end{equation}
with Gumbel-softmax temperature $\tau$ annealed over training. This yields a fully differentiable autoregressive rollout $\hat p_{1:H}$ (one sample per window at train time, versus an $N$-rollout average at test time), from which we compute the per-window $\hat\rho_k$ of Eq.~\ref{eq:rho} and penalize its discrepancy to the realized future squared-return ACF $\rho^\star_k$ of the ground-truth path.

A per-window loss $(\hat\rho_k-\rho^\star_k)^2$ is a high-variance target even for a perfectly calibrated model, because $\hat\rho_k$ is a sample ACF computed from as few as $H{-}k$ squared returns. We therefore average \emph{before} squaring, over a batch of $B$ windows indexed by $b$,
\begin{equation}
\label{eq:batchacf}
\bar\rho_k \;=\; \frac{1}{B}\sum_{b=1}^{B}\hat\rho_k^{(b)}, \qquad
\bar\rho^\star_k \;=\; \frac{1}{B}\sum_{b=1}^{B}\rho^{\star(b)}_k,
\end{equation}
which reduces the sampling variance of each term before the squared difference is taken. The \emph{aggregate} training loss is then
\begin{equation}
\label{eq:acfloss}
\mathcal{L}_{\mathrm{ACF}^2} \;=\; \sum_{k=1}^{K} w_k\,
\big(\bar\rho_k-\bar\rho^\star_k\big)^2,
\qquad w_k=e^{-k/\gamma}.
\end{equation}
We retain the general $K$-lag form as the natural statement of the stylized fact where near lags carry most of the stylized-fact signal and empirical persistence decays with lag \citep{cont2001stylized} (Figure~\ref{fig:overview}, panel 1), so $w_k$ is a decay kernel rather than a normalized distribution over lags. 
When $K{=}1$ (our default), only $w_1=e^{-1/\gamma}$ enters the loss, so $\lambda$ and $\gamma$ affect the loss only through the product $\lambda e^{-1/\gamma}$, while normalizing $w_k$ would simply rescale $\lambda$.
The total objective is
\begin{equation}
    \label{eq:total-loss}
    \mathcal{L}=\mathcal{L}_{\mathrm{CE}}+\lambda\,\mathcal{L}_{\mathrm{ACF}^2},
\end{equation}
with plain CE recovered at $\lambda=0$. The ACF$^2$ term is thus the only difference from the CE baseline, so any gap between the two is attributable to this signal. However, this signal constrains only the \emph{dependence structure} of squared returns, not their overall \emph{level}. In particular, it cannot distinguish opposite trends in volatility level. We formalize this property below.

\begin{proposition}[The volatility-trend channel]
\label{prop}
Let $r_t=\sigma_0e^{gt}\varepsilon_t$, where $g$ controls the exponential trend in volatility, for $t=1,\dots,n$ with $\varepsilon_t$ i.i.d.\ and no clustering. Then (a) the law of the sample ACF $\hat\rho_k(r^2)$ is invariant under $g\mapsto-g$,
while (b) $\mathbb{E}\big[\sum_t r_t^2\big]=\sigma_0^2\sum_{t=1}^{n}e^{2gt}$ is strictly increasing in
$g$, with
$\mathbb{E}\big[\sum_t r_t^2\,;+g\big]\,/\,\mathbb{E}\big[\sum_t r_t^2\,;-g\big]=e^{2g(n+1)}$.
\end{proposition}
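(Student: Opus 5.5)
The plan is to reduce the $g\mapsto -g$ symmetry to a \emph{time reversal plus a rescaling} of the squared-return sequence, and then check that $\hat\rho_k$ in Eq.~\ref{eq:rho} is blind to both operations. Write $y_t=r_t^2=\sigma_0^2e^{2gt}\varepsilon_t^2$ under $+g$, and $y'_t=\sigma_0^2e^{-2gt}\varepsilon_t^2$ under $-g$. The key algebraic identity is
\[
y'_t=\sigma_0^2e^{-2g(n+1)}\,e^{2g(n+1-t)}\varepsilon_t^2 .
\]
Re-index with $s=n+1-t$. Because $(\varepsilon_1,\dots,\varepsilon_n)$ is i.i.d., it is exchangeable, so $(\varepsilon_{n+1-s})_s$ has the same joint law as $(\varepsilon_s)_s$. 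Hence the sequence $(y'_t)_{t=1}^n$ has the same law as $c\,(y_{n+1-t})_{t=1}^n$, with the deterministic constant $c=e^{-2g(n+1)}>0$.

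It then suffices to show that the map $y\mapsto\hat\rho_k(y)$ is invariant under two operations.
\begin{itemize}[leftmargin=*,itemsep=1pt,topsep=2pt]
\item \emph{Positive rescaling $y\mapsto cy$.} The mean scales as $\overline{cy}=c\bar y$, so every centered term is multiplied by $c$. Numerator and denominator are both homogeneous of degree $2$, and the factor $c^2$ cancels.
\item \emph{Time reversal $y_t\mapsto y_{n+1-t}$.} The mean and the denominator $\sum_t(y_t-\bar y)^2$ are permutation invariant. The numerator $\sum_{t=k+1}^{n}(y_t-\bar y)(y_{t-k}-\bar y)$ is a sum over all unordered pairs of indices at distance exactly $k$. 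Reversal maps the pair $\{t-k,t\}$ to $\{n+1-t,\,n+1-t+k\}$, which is a bijection on this set of pairs, and each summand is symmetric in its two factors.
\end{itemize}
Since $\hat\rho_k$ is a deterministic function of the sequence and is invariant under both operations, equality in law of the inputs gives equality in law of $\hat\rho_k$. This proves (a), jointly over all lags $k$.

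For (b), I will assume the normalization $\mathbb{E}\varepsilon_t^2=1$, which is implicit in the stated formula; otherwise a positive factor $\mathbb{E}\varepsilon^2$ appears and cancels in the ratio. By linearity, $\mathbb{E}\sum_t r_t^2=\sigma_0^2\sum_{t=1}^n e^{2gt}$. Strict monotonicity in $g$ follows because the derivative $\sigma_0^2\sum_t 2t\,e^{2gt}$ is strictly positive. For the ratio, apply the same re-indexing $s=n+1-t$:
\[
\sum_t e^{-2gt}=e^{-2g(n+1)}\sum_s e^{2gs},
\]
which gives the ratio $e^{2g(n+1)}$ directly.

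I expect the main obstacle to be bookkeeping rather than substance. First, I must make sure the exact summation ranges in Eq.~\ref{eq:rho} are reversal-symmetric, and the proof should state that the numerator runs over $t=k+1,\dots,n$. Second, I must handle the degenerate event where the denominator vanishes (all $y_t$ equal). That event is itself invariant under rescaling and reversal, so any convention for defining $\hat\rho_k$ there is preserved, and it has probability zero whenever $\varepsilon^2$ is non-degenerate. Finally, I will stress that the argument uses only exchangeability of $\varepsilon$, and that it shows the ACF$^2$ loss cannot separate an increasing volatility trend from a decreasing one, even though their levels differ by the factor $e^{2g(n+1)}$.
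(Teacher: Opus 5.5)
Your proof is correct and is essentially the paper's argument. Both reduce $g\mapsto-g$ to a time reversal plus a positive rescaling of the squared-return vector: you show $y'\overset{d}{=}e^{-2g(n+1)}\mathcal{R}y$ via exchangeability, while the paper writes $\mathcal{R}X^{(g)}=e^{2g(n+1)}Y$ with $Y\overset{d}{=}X^{(-g)}$. Both then verify the same two invariances of $\hat\rho_k$, and both prove (b) by the identical derivative and re-indexing computation.

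One small point concerns the zero-denominator event. Your observation that this event is invariant under both operations is what actually makes the argument go through, taking a constant convention such as $\hat\rho_k:=0$ there. Your side claim, which the paper also makes, that the event is null whenever $\varepsilon^2$ is non-degenerate really needs $\varepsilon^2$ to be atomless. For example, if $\varepsilon^2$ has atoms at $a e^{-2gt}$ for $t=1,\dots,n$, then $r_1^2=\dots=r_n^2$ with positive probability.
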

\begin{proof}
See Appendix~\ref{app:theory}.
\end{proof}

A volatility trend can substantially change the overall level of squared returns without changing the distribution of their squared-return ACF. Matching the ACF therefore does not control the overall volatility level, leaving the objective insensitive to the direction of a volatility trend. This provides a mechanistic explanation for a failure mode observed in our VCA runs, where some runs end with sharply inflated forecast variance despite a small ACF$^2$ validation gap (Section~\ref{sec:theory}).
In practice, this effect is minor on our primary backbone, Kronos: VCA improves on the frozen model across all three datasets and horizons (Table~\ref{tab:main}), with variance inflation remaining modest relative to the magnitude possible in Proposition~\ref{prop}. The effect is more visible on Chronos-T5-small (Appendix Table~\ref{tab:chronos_main}), a smaller and more general backbone. We discuss this limitation further in Appendix~\ref{sec:limitations}.

\paragraph{Why target the ACF gap, and not volatility level directly?}
Realized-variance level is a natural alternative target for a volatility-aware objective, but we find out empirically that its gap to a naive baseline is much less consistent across datasets than the ACF$^2$ gap. In contrast, ACF$^2$ shows a stable $\approx2\times$ gap and is the only consistent performer across all three datasets, motivating our choice of ACF$^2$ as the training signal (Appendix Table~\ref{tab:headroom}).
Intuitively, variance level is \textit{scale-dependent} and can be matched reasonably well by a naive baseline that tracks the unconditional scale, whereas ACF$^2$ captures \textit{scale-invariant} temporal dependence that such baselines cannot reproduce. This makes its headroom more stable across datasets and regimes.

\section{Experiments}
\label{sec:experiment}

We evaluate VCA across datasets and forecasting horizons against alternative adaptation objectives and classical volatility models, using both finance-specific and general-purpose time-series foundation models. Our ablations then study three aspects of the method: how regime matching affects fine-tuning data efficiency, how adaptation changes model weights, and how the main design choices ($K$, $W$, and $\lambda$) affect performance.

\subsection{Setup}\label{sec:setup}
\paragraph{Datasets.} We evaluate on three asset sets in the per-symbol OHLCV format consumed by the Kronos inference pipeline: \textbf{CSI300} and \textbf{CSI500}, standard equity benchmarks in prior return-forecasting work \citep{master2024,hist2021,doubleadapt2023} sourced from Qlib \citep{qlib2020}, and \textbf{Crypto}, 20 USDT pairs from the Binance API~\citep{binance_api}. More details are in Appendix Table~\ref{tab:data}.

\begin{table}[ht]
\centering
\caption{Main comparison under \textsc{fore}. The Frozen row (for reference) is raw RankIC and MAE; GARCH(1,1) and HAR-RV forecast only variance, so RankIC and Score are not applicable to them (\textit{-}). Every other row is a relative delta against Frozen in percentage (higher is better), and the last group averages both deltas and the Score over horizons.
Bold marks the best fine-tuned method in each column. \textbf{WR} denotes win rate, computed as the fraction of the eight per-horizon columns in which a method has the best result.
Per-horizon values are in Appendix Tables~\ref{tab:raw8}--\ref{tab:raw48}.
}
\label{tab:main}
\setlength{\tabcolsep}{2pt}
\resizebox{\textwidth}{!}{%
\begin{tabular}{clrrrrrrrr|rrrc}
\toprule
 &  & \multicolumn{2}{c}{$H{=}8$} & \multicolumn{2}{c}{$H{=}16$} & \multicolumn{2}{c}{$H{=}32$} & \multicolumn{2}{c}{$H{=}48$} & \multicolumn{3}{c}{average over $H$} & \\
\cmidrule(lr){3-4} \cmidrule(lr){5-6} \cmidrule(lr){7-8} \cmidrule(lr){9-10} \cmidrule(lr){11-13}
DS & Method & $\Delta_\text{RankIC}$ & $\Delta_\text{MAE}$ & $\Delta_\text{RankIC}$ & $\Delta_\text{MAE}$ & $\Delta_\text{RankIC}$ & $\Delta_\text{MAE}$ & $\Delta_\text{RankIC}$ & $\Delta_\text{MAE}$ & $\Delta_\text{RankIC}$ & $\Delta_\text{MAE}$ & Score & WR \\
\midrule
\multirow{6}{*}{\rotatebox[origin=c]{90}{CSI300}} & Frozen & +0.291 & 3.4e-3 & +0.317 & 6.6e-3 & +0.333 & 1.4e-2 & +0.333 & 2.3e-2 & - & - & - & - \\
 & GARCH & - & -10.5 & - & -13.3 & - & -7.3 & - & -0.6 & - & -7.9 & - & - \\
 & HAR-RV & - & -6.2 & - & -5.2 & - & +7.9 & - & +15.8 & - & +3.1 & - & - \\
\cmidrule(lr){2-14}
 & CE & -12.6 & +6.7 & \textbf{+7.5} & \textbf{-6.4} & -11.8 & -11.8 & -8.8 & -8.5 & -6.4 & -5.0 & -5.7 & 2/8 \\
 & MSE & -16.8 & -5.3 & +5.9 & -13.0 & -6.5 & -11.8 & \textbf{-1.7} & -4.7 & -4.8 & -8.7 & -6.7 & 1/8 \\
 & \cellcolor{green!15} \textbf{VCA} & \cellcolor{green!15} \textbf{-6.0} & \cellcolor{green!15} \textbf{+10.0} & \cellcolor{green!15} +3.6 & \cellcolor{green!15} -19.1 & \cellcolor{green!15} \textbf{+8.0} & \cellcolor{green!15} \textbf{-4.8} & \cellcolor{green!15} -6.2 & \cellcolor{green!15} \textbf{+7.2} & \cellcolor{green!15} \textbf{-0.2} & \cellcolor{green!15} \textbf{-1.7} & \cellcolor{green!15} \textbf{-0.9} & \cellcolor{green!15} \textbf{5/8} \\
\midrule
\multirow{6}{*}{\rotatebox[origin=c]{90}{CSI500}} & Frozen & +0.849 & 4.5e-3 & +0.853 & 9.0e-3 & +0.864 & 2.0e-2 & +0.880 & 3.1e-2 & - & - & - & - \\
 & GARCH & - & -11.4 & - & -13.9 & - & -6.7 & - & -1.4 & - & -8.3 & - & - \\
 & HAR-RV & - & -13.1 & - & -9.0 & - & +4.2 & - & +12.0 & - & -1.5 & - & - \\
\cmidrule(lr){2-14}
 & CE & -3.4 & \textbf{+6.1} & -4.5 & +3.1 & \textbf{-0.6} & -5.9 & -8.3 & -1.2 & -4.2 & +0.5 & -1.9 & 2/8 \\
 & MSE & -4.0 & -5.0 & -2.7 & -8.1 & -3.3 & -5.4 & -5.7 & -2.3 & -3.9 & -5.2 & -4.6 & 0/8 \\
 & \cellcolor{green!15} \textbf{VCA} & \cellcolor{green!15} \textbf{-0.6} & \cellcolor{green!15} -5.7 & \cellcolor{green!15} \textbf{+1.0} & \cellcolor{green!15} \textbf{+4.5} & \cellcolor{green!15} -1.3 & \cellcolor{green!15} \textbf{+3.9} & \cellcolor{green!15} \textbf{-4.9} & \cellcolor{green!15} \textbf{+3.4} & \cellcolor{green!15} \textbf{-1.4} & \cellcolor{green!15} \textbf{+1.5} & \cellcolor{green!15} \textbf{+0.0} & \cellcolor{green!15} \textbf{6/8} \\
\midrule
\multirow{6}{*}{\rotatebox[origin=c]{90}{Crypto}} & Frozen & +0.055 & 1.4e-4 & +0.051 & 3.1e-4 & +0.051 & 6.8e-4 & +0.044 & 1.1e-3 & - & - & - & - \\
 & GARCH & - & -0.7 & - & +9.2 & - & +8.0 & - & +12.6 & - & +7.3 & - & - \\
 & HAR-RV & - & -13.1 & - & -4.1 & - & -0.8 & - & +0.4 & - & -4.4 & - & - \\
\cmidrule(lr){2-14}
 & CE & -12.0 & \textbf{+5.7} & -14.3 & +6.2 & -50.7 & \textbf{+6.2} & -11.3 & +4.5 & -22.1 & \textbf{+5.7} & -8.2 & 2/8 \\
 & MSE & -42.1 & -0.5 & -5.7 & +0.4 & \textbf{-2.7} & -6.6 & +6.5 & +3.6 & -11.0 & -0.8 & -5.9 & 1/8 \\
 & \cellcolor{green!15} \textbf{VCA} & \cellcolor{green!15} \textbf{+3.4} & \cellcolor{green!15} +3.9 & \cellcolor{green!15} \textbf{+7.3} & \cellcolor{green!15} \textbf{+11.0} & \cellcolor{green!15} -6.1 & \cellcolor{green!15} -4.8 & \cellcolor{green!15} \textbf{+10.2} & \cellcolor{green!15} \textbf{+6.0} & \cellcolor{green!15} \textbf{+3.7} & \cellcolor{green!15} +4.0 & \cellcolor{green!15} \textbf{+3.9} & \cellcolor{green!15} \textbf{5/8} \\
\bottomrule
\end{tabular}

}
\end{table}

\paragraph{Metrics.} For the path-wise quantity, we report price RankIC (\textbf{RankIC}), the per-window Spearman correlation between the predicted and realized OHLC price path, averaged over windows. For the volatility side, we report \textbf{$\sigma^2$-MAE}, the mean absolute error of realized variance. Both metrics follow the evaluation protocol of prior work~\citep{kronos2025}. We use RankIC and $\sigma^2$-MAE for our main evaluation, while QLIKE~\citep{patton2011qlike} is reported as a complementary volatility metric in Appendix Table~\ref{tab:mainqlike}. The overall score is
\begin{equation}
\label{eq:score}
\mathrm{Score} \;=\; \tfrac{1}{2}\big(\Delta_\text{RankIC}\% + \Delta_\text{MAE}\%\big),
\end{equation}
where $\Delta$ is relative to the frozen model, with both terms signed so that positive means improvement.
Definitions of RankIC, $\sigma^2$-MAE and additional metrics (QLIKE, price IC, return IC/RankIC, and the ACF$^2$ gap, Eq.~\ref{eq:acfloss}) are in Appendix~\ref{app:metrics}.

\paragraph{Baselines.} We compare our method, \textbf{VCA}, with three methods: (1) the frozen pre-trained model \textbf{Frozen}; (2) plain cross-entropy fine-tuning \textbf{CE}; and (3) \textbf{MSE}, which keeps the identical differentiable AR rollout and replaces only $\mathcal{L}_{\mathrm{ACF}^2}$ with a relative squared error on the rolled-out path,
$\mathcal{L}_{\mathrm{MSE}}=\frac{1}{H}\sum_{t=1}^{H}\big((\hat p_t-p^\star_t)/p^\star_t\big)^2$.
We additionally compare VCA with two classical volatility models, GARCH(1,1) \citep{bollerslev1986garch} and HAR-RV \citep{corsi2009har}, which fit per symbol and forecast at the same windows.

\paragraph{Protocol.}
\label{sec:protocol}
All methods are fully fine-tuned without adapters for the same number of epochs and learning rate over three seeds on 4 V100 GPUs, with window size fixed at $W{=}160$ and horizon $H$ swept over $\{8,16,32,48\}$. Checkpoints are selected by lowest teacher-forced validation cross-entropy for CE, and by lowest aggregate ACF$^2$ gap on autoregressive validation rollouts for VCA and MSE. 
Rollouts follow two sampling conventions: \textsc{fore} ($T{=}0.6$, $N{=}10$) and \textsc{vol} ($T{=}0.9$, $N{=}1$), the latter matching the setting Kronos uses for its own variance metric. The two share a single RankIC, computed once from the \textsc{fore} rollout; only $\sigma^2$-MAE is recomputed per convention, so the \textsc{fore} and \textsc{vol} scores differ through that term alone. 
We report \textsc{fore} in the main tables and \textsc{vol} in the appendix (Table~\ref{tab:volgrid}); QLIKE also uses \textsc{fore}.
More details are in Appendix~\ref{app:imp}.

\begin{figure*}[ht]
\centering
\includegraphics[width=\textwidth]{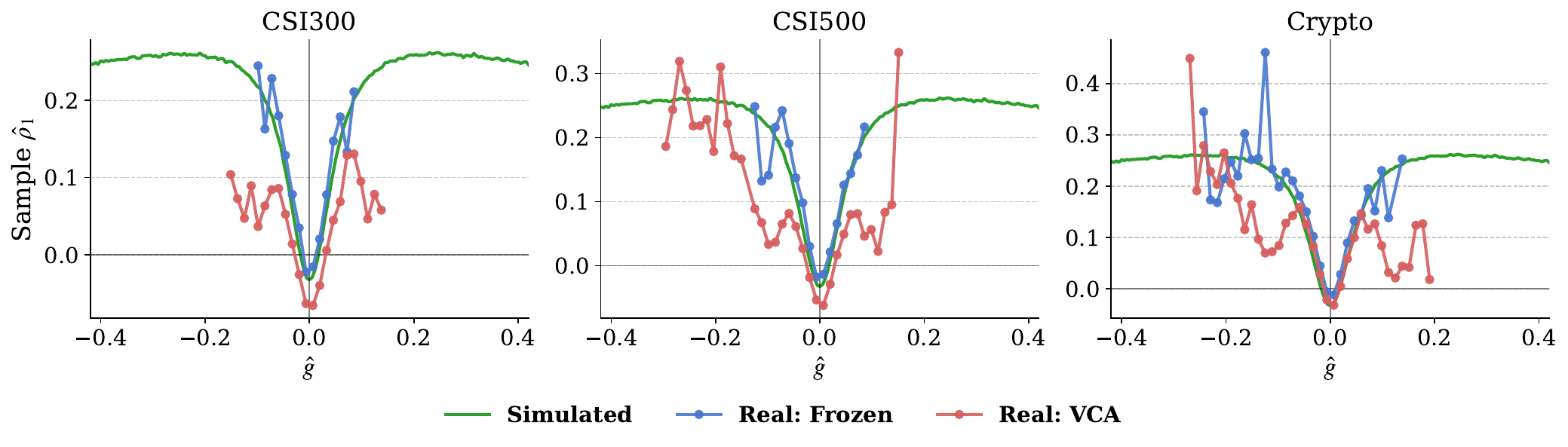}
\caption{Volatility-trend channel, simulated vs.\ real rollouts ($H{=}32$), one panel per dataset. \emph{Simulated} is $\mathbb{E}[\hat\rho_1]$ from $r_t=\sigma_0e^{gt}\varepsilon_t$ (no genuine clustering) as $g$ varies; \emph{Real} shows the mean $\hat\rho_1$ within bins of the per-window trend rate $\hat g$ from the predicted rollout, computed as the log-ratio of second-half to first-half squared returns. We use second-half over first-half so that $\hat g$ has the same sign as $g$, following Proposition~\ref{prop}(b).}
\label{fig:trend_channel}
\end{figure*}

\subsection{Main results}
\label{sec:results}

\paragraph{VCA is the strongest fine-tuning objective in aggregate.}
\label{sec:scaling}
VCA achieves the best average Score among fine-tuned methods and the highest win rate on all three datasets (Table~\ref{tab:main}). 
While aggregate results are reported for comprehensiveness, practical deployment typically targets a fixed horizon. At $H{=}16$, VCA consistently improves RankIC and achieves a higher average Score than the frozen model across datasets.
Across horizons, VCA is also the only fine-tuned method to improve over the frozen model under $\sigma^2$-MAE.
Compared with MSE, which shares the same rollout, VCA's margin isolates the gain to the ACF$^2$ loss itself rather than to rollout-based training in general.
The margin over CE further shows that the ACF$^2$ signal adds information that next-token cross-entropy does not provide. It does so by moving the model further from the pre-trained weights, not closer (Section~\ref{sec:forget}), indicating an additional training signal rather than a regularizer.
Moreover, VCA outperforms both GARCH(1,1) and HAR-RV on CSI500, while each classical baseline is better on one of the other two datasets.

Under \textsc{vol} (Appendix Table~\ref{tab:volgrid}), VCA remains best on CSI300 and Crypto, with CSI500 as the only exception. 
Under QLIKE (Appendix Table~\ref{tab:mainqlike}), VCA shows the largest average improvement on CSI300 and CSI500 but not Crypto, where its residual under-dispersion is penalized more heavily. All fine-tuned methods trail both classical baselines on all three datasets under this metric. These per-symbol-fit models do not provide a price path, and hence no RankIC or Score, or a shared pre-trained backbone. VCA also achieves the lowest average ACF$^2$ gap at each horizon across datasets (Appendix Table~\ref{tab:acf2}).

\paragraph{Evaluation on new backbone.}
We further evaluate VCA on \textbf{Chronos-T5-small} \citep{chronos2024}, a smaller and less-trained backbone, using the same fine-tuning setup at $H{=}16$ and $H{=}32$ across all three datasets (Appendix Table~\ref{tab:chronos_main}). 
VCA produces improvements on Crypto at $H{=}16$ and CSI500 at $H{=}32$, with the latter improving the average Score by $69.1\%$ over the frozen baseline. 
The results are less stable than with Kronos, with rollout instability on CSI300 and Crypto at $H{=}32$, suggesting that VCA benefits more from domain-specific fine-tuning data than from a general-purpose backbone.

\paragraph{Empirical evidence for the volatility-trend channel.}
\label{sec:theory}
Figure~\ref{fig:trend_channel} plots the sample ACF$_1$ of squared returns against the per-window trend rate $\hat g$ for both simulated and real rollouts. In the simulation, a deterministic trend with no true volatility clustering already produces a non-zero ACF when $|\hat g|$ is large, showing that trend alone can mimic the signal optimized by VCA. The Frozen model's real rollouts closely follow the same relationship across all three datasets, confirming that this effect also occurs in real model outputs.
VCA's rollouts instead lie below the curve at matched $\hat g$ values and show substantially more scatter. This suggests that VCA does not simply increase $\hat g$ to exploit the objective, although the relationship does not fully explain the observed instability.

\begin{table}[t]
\centering
\small
\caption{Regime-matched definition. vol($\cdot$) is the standard deviation of log returns on close price (Appendix~\ref{app:metrics}). test/first and test/last are the ratios used to pick the matched slice (bold: closer to one); val/first and val/last are the same ratios computed from the validation period.The shaded column identifies whether the \textit{first} or \textit{last} slice is selected.}
\label{tab:matchrule}
\resizebox{\textwidth}{!}{
\begin{tabular}{llccccccccc}
\toprule
Dataset & Slice & vol(first) & vol(last) & vol(test) & vol(val) & test/first & test/last & val/first & val/last & matched \\
\midrule
\multirow{2}{*}{CSI300} & $10\%$ & 0.0298 & 0.0192 & 0.0253 & 0.0241 & \textbf{0.85$\times$} & 1.32$\times$ & \textbf{0.81$\times$} & 1.26$\times$ & \cellcolor{green!15} first \\
 & $1\%$ & 0.0287 & 0.0182 & 0.0253 & 0.0241 & \textbf{0.88$\times$} & 1.39$\times$ & \textbf{0.84$\times$} & 1.33$\times$ & \cellcolor{green!15} first \\
\midrule
\multirow{2}{*}{CSI500} & $10\%$ & 0.0299 & 0.0213 & 0.0282 & 0.0267 & \textbf{0.94$\times$} & 1.32$\times$ & \textbf{0.89$\times$} & 1.25$\times$ & \cellcolor{green!15} first \\
 & $1\%$ & 0.0235 & 0.0196 & 0.0282 & 0.0267 & \textbf{1.20$\times$} & 1.44$\times$ & \textbf{1.13$\times$} & 1.36$\times$ & \cellcolor{green!15} first \\
\midrule
\multirow{2}{*}{Crypto} & $10\%$ & 0.0105 & 0.0042 & 0.0051 & 0.0050 & 0.48$\times$ & \textbf{1.21$\times$} & 0.47$\times$ & \textbf{1.18$\times$} & \cellcolor{green!15} last \\
 & $1\%$ & 0.0133 & 0.0051 & 0.0051 & 0.0050 & 0.38$\times$ & \textbf{0.99$\times$} & 0.38$\times$ & \textbf{0.97$\times$} & \cellcolor{green!15} last \\
\bottomrule
\end{tabular}

}
\end{table}
\begin{figure}[t]
\centering
\includegraphics[width=\textwidth]{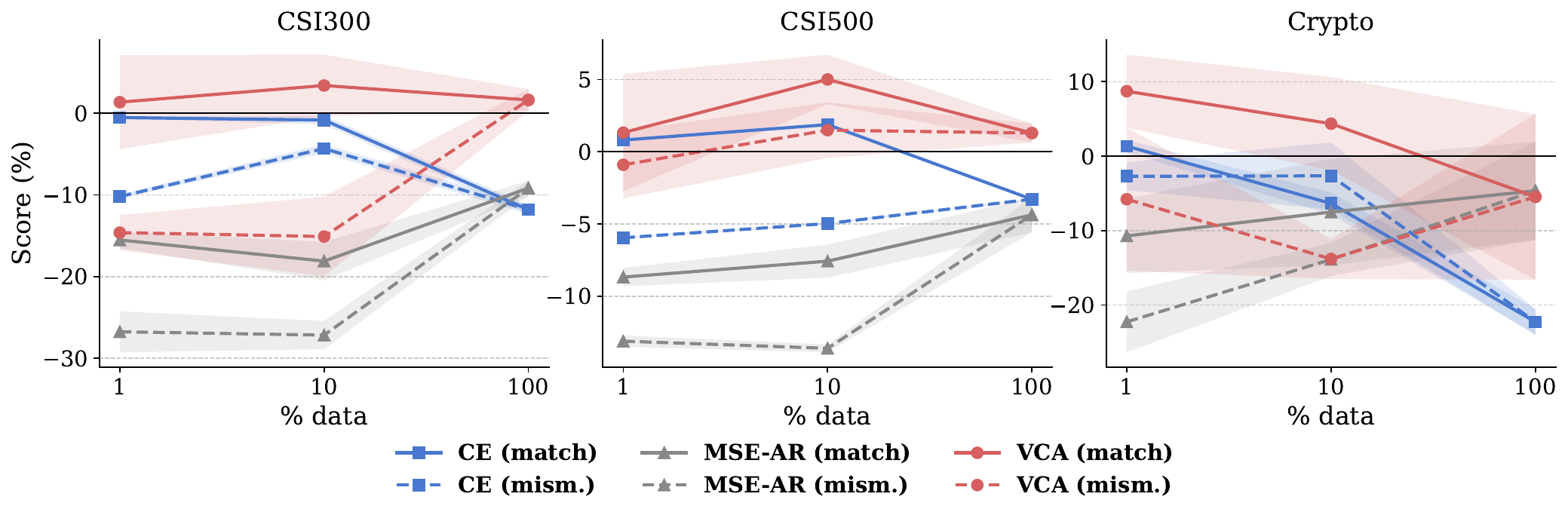}
\caption{Data efficiency across all three methods, Score against the frozen model ($H{=}32$, \textsc{fore}). Solid lines are the \textit{regime-matched} slice and dashed the \textit{mismatched} one, with bands of one standard deviation over 3 seeds. The regime-matched slices are identified in Table~\ref{tab:matchrule}. Detailed values are in Appendix Table~\ref{tab:dataeff}.}
\label{fig:dataeff}
\end{figure}

\subsection{Ablations}
\label{sec:ablations}

\paragraph{A regime-matched slice makes fine-tuning data-efficient.}
\label{sec:dataeff}

Fine-tuning on a chronological slice (first or last) of the training data can outperform full-data fine-tuning when the slice's volatility regime matches the test period, suggesting an \emph{alignment} effect rather than a purely \emph{recency} effect. Table~\ref{tab:matchrule} determines the matched slice without evaluating the test period. The validation-based rule picks the same slice as the test-based rule in all six (dataset, slice-size) cells, showing that the selection can be made before deployment rather than after observing the test period. 
Volatility $\mathrm{vol}(\cdot)$ is more discriminative than an ACF-based statistic, with raw lag-1 ACF of squared returns disagreeing across slice sizes on CSI300 (Appendix Table~\ref{tab:matchrule_acf1}), consistent with noisier ACF estimates at this sample size (Section~\ref{sec:mainmethod}).
The matched slice outperforms full-data fine-tuning across datasets and evaluation conventions at the tested slice sizes, while full-data fine-tuning is negative on Crypto (Figure~\ref{fig:dataeff}). 
Mismatch is costly on Crypto and CSI300 but not on CSI500, so regime matching should be viewed as a useful selection rule rather than a universal penalty for mismatch.

This data-efficiency effect is specific to $H{=}32$ and does not transfer to $H{=}16$, where small-data fine-tuning remains substantially below the frozen model until the full dataset is used (Appendix Figure~\ref{fig:dataeff16}). The degradation reflects a drop in RankIC and affects CE as well as VCA, suggesting that broader data coverage is more important at shorter horizons, while regime alignment becomes more useful at longer horizons. We therefore view the two factors as complementary rather than treating regime alignment as a general substitute for data volume.

\begin{figure}[t]
\centering
\includegraphics[width=\textwidth]{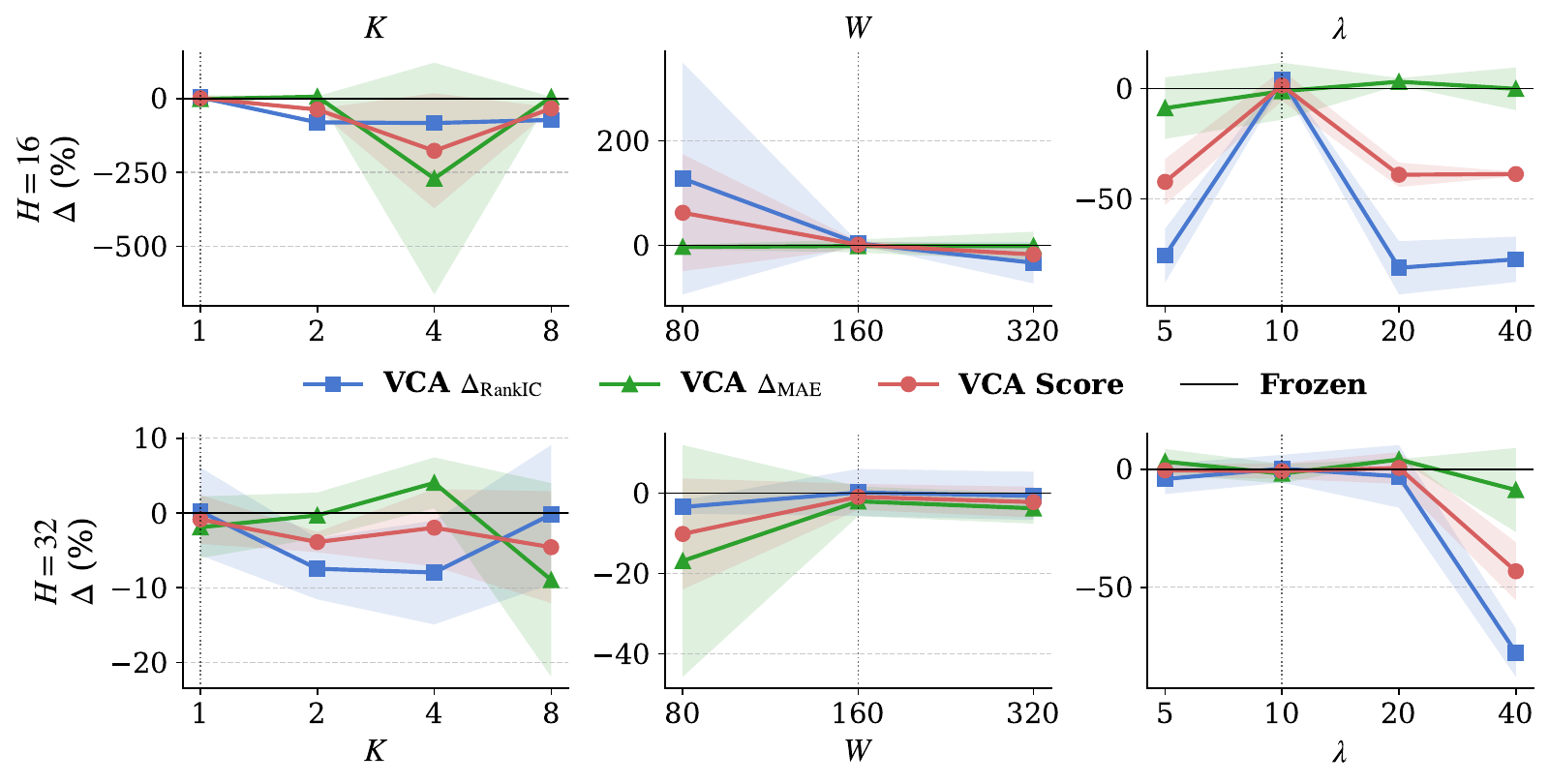}
\caption{Ablation sweeps under \textsc{fore}, mean over the three datasets, band is one standard deviation of the per-dataset delta. $\Delta_\text{RankIC}$ and $\Delta_\text{MAE}$ are the two halves Score averages; the black line at 0 is the frozen backbone. Detailed values are in Appendix Tables~\ref{tab:abl} and~\ref{tab:abl_h32}.}
\label{fig:kwlambda}
\end{figure}

\paragraph{Does adaptation forget?}
\label{sec:forget}
The regime-matched slice matches full-data fine-tuning on the target, raising the question of how much additional weight drift the unused data introduces. We measure relative weight drift, $|\theta-\theta_0|_2/|\theta_0|_2$ from the frozen model $\theta_0$, as a proxy for how much a model \emph{could} have forgotten  (Appendix Table~\ref{tab:drift}). At full data and $H{=}32$, VCA produces larger weight drift than CE, with a smaller difference at $H{=}16$. The regime-matched 10\% slice reduces VCA's weight drift at both horizons while matching or outperforming full-data fine-tuning on the target. However, its drift remains comparable to or higher than CE, indicating that regime matching reduces, but does not eliminate, the additional drift associated with VCA. At $H{=}16$, full-data drift is larger in five of the six method--dataset combinations, with VCA on CSI300 as the only exception.

\paragraph{Effect of lag count $K$.}
The lag count has a stronger effect at $H{=}16$. The choice $K{=}1$ produces a sharp peak, while larger lag counts substantially reduce RankIC and increase seed-to-seed variation. The same ordering remains at $H{=}32$, but the performance gap is much smaller across the sweep, with $K{=}1$ still leading on all three datasets (Figure~\ref{fig:kwlambda}-left).

\paragraph{Effect of lookback $W$.}
The lookback shows a similar horizon-dependent pattern. At $H{=}16$, raw RankIC varies sharply with $W$ and declines on both sides of the best-performing setting ($W{=}160$). The $\Delta$-based Score in Figure~\ref{fig:kwlambda} makes this contrast appear even larger because it is measured relative to the frozen backbone at the same $W$, whose performance is much lower with shorter context. At $H{=}32$, this dependence largely disappears, with raw RankIC remaining close across $W$ values.

\paragraph{Effect of weight $\lambda$.}
The ACF$^2$ weight behaves differently from the other two hyperparameters. At $H{=}16$, the recipe value gives a narrow peak: reducing $\lambda$ weakens the auxiliary signal mildly, whereas increasing it causes much larger RankIC losses before the effect saturates. This sharp sensitivity persists at $H{=}32$, where the largest tested weight again causes severe degradation. Thus, unlike $W$ and $K$, $\lambda$ remains sensitive across horizons.

\section{Conclusion}
\label{sec:conclusion}

We introduce \emph{Volatility-Clustering Adaptation (VCA)}, which augments next-token prediction with a differentiable signal for the autocorrelation of squared returns over autoregressive rollouts. Across three asset sets and two evaluation conventions, VCA delivers the best average performance, primarily reflected in variance-related metrics. We further find that volatility-matched data can improve data efficiency at longer horizons, while broader data coverage remains important at shorter horizons. 
These findings highlight the value of adapting time-series foundation models with temporal objectives rather than relying on token-level prediction.
Limitations are discussed in Appendix~\ref{sec:limitations}.

\subsection*{AI use statement}
We used generative AI tools solely to support writing, including language polishing and clarity, not to
develop ideas, design methods, or run experiments. All AI-assisted text was reviewed and verified by the authors, who take full responsibility for the final content.

\subsection*{Ethics statement}
VCA is a research recipe, not a validated trading system, so deploying it on
live capital without independently verifying stability could turn a research-scale gain into a real
loss. We use only public data and open pre-trained backbones, with no human subjects.

\subsection*{Reproducibility statement}
Datasets and implementation details are in Section~\ref{sec:experiment}, Table~\ref{tab:data} and Appendix Table~\ref{tab:hyper}. Proof of Proposition~\ref{prop} is in Appendix~\ref{app:theory}. 
We will release the code upon publication.

\bibliography{iclr2027_conference}
\bibliographystyle{iclr2027_conference}

\clearpage
\appendix

\section*{Appendix}

\section{Additional Implementation Details}\label{app:imp}

We summarize dataset details in Table~\ref{tab:data} and hyperparameters in Table~\ref{tab:hyper}.
\paragraph{GARCH/HAR-RV implementation.}
\label{app:garch}

Both baselines are fit once per symbol on the combined training and validation split, as they have no checkpoint to select and therefore require no separate validation-based calibration.
Each baseline then forecasts at exactly the same strided test windows Kronos uses (same lookback$=$160, stride, and test period per dataset). 
GARCH(1,1) is fit by maximum likelihood estimation; HAR-RV regresses realized variance on its own lags
$\{1,5,22\}$ by ordinary least squares. Lags are bar-level, not calendar day/week/month, matching how QLIKE and $\sigma^2$ are defined throughout this paper (Appendix~\ref{app:metrics}). 

\section{Proof of Proposition~\ref{prop}}
\label{app:theory}

\begin{proof}
Write
\begin{equation}
\label{eq:Xg}
X_t^{(g)}:=r_t^2=\sigma_0^2e^{2gt}\varepsilon_t^2,
\qquad t=1,\dots,n,
\end{equation}
let $X^{(g)}:=X_{1:n}^{(g)}$, and let $\mathcal Rx_{1:n}:=(x_n,\dots,x_1)$
denote time reversal. Since $\varepsilon_t^2$ is non-degenerate, the
denominator of Eq.~\ref{eq:rho} evaluated at $X^{(g)}$ is a.s.\ positive,
so $\hat\rho_k(X^{(g)})$ is a.s.\ well defined.

We first record two elementary invariances of Eq.~\ref{eq:rho}, valid for
every $x_{1:n}$ with positive sample variance, every
$k\in\{1,\dots,n-1\}$ and every $c\neq0$:
\begin{align}
\hat\rho_k(\mathcal Rx_{1:n})&=\hat\rho_k(x_{1:n}),
\label{eq:rev-invariance}\\[2pt]
\hat\rho_k(c\,x_{1:n})&=\hat\rho_k(x_{1:n}).
\label{eq:scale-invariance}
\end{align}
Eq.~\ref{eq:rev-invariance} holds because reversal is a permutation of
the entries, hence leaves $\bar x$ and $\sum_t(x_t-\bar x)^2$ unchanged,
while in the numerator the substitution $s=n+1-t-k$ is a bijection of
$\{1,\dots,n-k\}$ onto itself carrying the pair
$\bigl((\mathcal Rx)_t,(\mathcal Rx)_{t+k}\bigr)=(x_{n+1-t},x_{n+1-t-k})$
to $(x_{s+k},x_s)$, so that reversal merely swaps the two coordinates of
each lag-$k$ product. Eq.~\ref{eq:scale-invariance} holds because
rescaling by $c$ multiplies the numerator and the denominator of
Eq.~\ref{eq:rho} by the same factor $c^2$.

\medskip
\noindent\emph{(a)} Reversing Eq.~\ref{eq:Xg} gives the pathwise identity
\begin{equation}
\label{eq:reverse-Xg}
\bigl(\mathcal RX^{(g)}\bigr)_t
=X_{n+1-t}^{(g)}
=e^{2g(n+1)}\,\sigma_0^2e^{-2gt}\varepsilon_{n+1-t}^2,
\qquad t=1,\dots,n,
\end{equation}
that is, $\mathcal RX^{(g)}=e^{2g(n+1)}Y$ with
$Y_t:=\sigma_0^2e^{-2gt}\varepsilon_{n+1-t}^2$ and $e^{2g(n+1)}>0$.
Combining Eq.~\ref{eq:reverse-Xg} with
Eqs.~\ref{eq:rev-invariance}--\ref{eq:scale-invariance},
\begin{equation}
\label{eq:acf-chain}
\hat\rho_k\!\left(X^{(g)}\right)
=\hat\rho_k\!\left(\mathcal RX^{(g)}\right)
=\hat\rho_k\!\left(Y\right)
\qquad\text{a.s., for every }k.
\end{equation}
Since the $\varepsilon_t$ are i.i.d., they are exchangeable, so
\begin{equation}
\label{eq:exchangeability}
(\varepsilon_n^2,\ldots,\varepsilon_1^2)
\overset{d}{=}(\varepsilon_1^2,\ldots,\varepsilon_n^2);
\end{equation}
applying the deterministic map
$(u_1,\dots,u_n)\mapsto(\sigma_0^2e^{-2gt}u_t)_{t=1}^n$ to both sides of
Eq.~\ref{eq:exchangeability} yields $Y\overset{d}{=}X^{(-g)}$ as random
vectors in $\mathbb R^n$. As $\hat\rho_k$ is measurable,
Eq.~\ref{eq:acf-chain} gives
\begin{equation}
\label{eq:acf-symmetry}
\hat\rho_k\!\left(X^{(g)}\right)
\overset{d}{=}
\hat\rho_k\!\left(X^{(-g)}\right),
\end{equation}
jointly over any finite set of lags, proving (a).

\medskip
\noindent\emph{(b)} Since $\mathbb E[\varepsilon_t^2]=1$,
\begin{equation}
\label{eq:expected-energy}
\mathbb E\!\left[\sum_{t=1}^n r_t^2\right]
=\sigma_0^2\sum_{t=1}^n e^{2gt}=:S_n(g),
\end{equation}
which is strictly increasing in $g$ because
\begin{equation}
\label{eq:monotonicity}
S_n'(g)=2\sigma_0^2\sum_{t=1}^n t\,e^{2gt}>0
\qquad\text{for all }g\in\mathbb R.
\end{equation}
Finally, setting $q:=e^{2g}>0$, the substitution $s=n+1-t$ gives
\begin{equation}
\label{eq:reverse-geometric}
\sum_{t=1}^n q^{-t}
=\sum_{s=1}^n q^{-(n+1-s)}
=q^{-(n+1)}\sum_{s=1}^n q^{s},
\end{equation}
and hence
\begin{equation}
\label{eq:energy-ratio}
\frac{\mathbb E\bigl[\sum_t r_t^2\,;\,+g\bigr]}
     {\mathbb E\bigl[\sum_t r_t^2\,;\,-g\bigr]}
=\frac{\sum_{t=1}^n q^{t}}{\sum_{t=1}^n q^{-t}}
=q^{\,n+1}=e^{2g(n+1)},
\end{equation}
proving (b).
\end{proof}

\section{Metric definitions}
\label{app:metrics}

Notation: $p_{1:H}$ is a close-price path over the $H$-bar test window, $p_0$ the anchor bar,
$r_t=\log(p_t/p_{t-1})$ the log-return, $\hat{\phantom{p}}$ predictions, $\star$ realized values,
and $\mathcal{W}$ the set of (symbol, test-window) pairs.

\paragraph{Rank IC.}
Spearman correlation between predicted and realized paths over the $H$ steps, averaged across the
four OHLC channels and all windows:
\begin{equation}
\label{eq:rankic}
\mathrm{RankIC} \;=\; \frac{1}{|\mathcal{W}|}\sum_{w\in\mathcal{W}} \frac{1}{4}\sum_{c\in\{O,H,L,C\}}
\mathrm{Spearman}\big(\hat p^{(c)}_{1:H,w},\, p^{\star(c)}_{1:H,w}\big).
\end{equation}
Replacing Spearman with Pearson correlation creates \emph{price IC}.

\paragraph{$\sigma^2$-MAE.}
Realized variance of a path is the sum of squared log-returns, with no annualization or square root:
\begin{equation}
\sigma^2(p_{1:H}) \;=\; \sum_{t=1}^{H-1}\big(\log p_{t+1}-\log p_t\big)^2 ,
\end{equation}
and the metric is its absolute error against the realized value, with the prediction averaged over
the $N$ test-time rollouts:
\begin{equation}
\sigma^2\text{-}\mathrm{MAE} \;=\; \frac{1}{|\mathcal{W}|}\sum_{w\in\mathcal{W}}
\big|\sigma^2(\hat p_{1:H,w})-\sigma^2(p^\star_{1:H,w})\big| .
\end{equation}

\paragraph{Parkinson volatility estimator.}
Used only in Table~\ref{tab:headroom}, alongside the same $\sigma^2(\cdot)$ already defined
above (``Vol.\ level (close)''), as an alternative realized-variance proxy from each
bar's intra-bar high $H_t$ and low $L_t$ rather than close-to-close returns (``Vol.\
level (Parkinson)''):
\begin{equation}
\sigma^2_{\mathrm{park}}(p_{1:H}) \;=\; \sum_{t=1}^{H}\Big(\log\tfrac{H_t}{L_t}\Big)^2 .
\end{equation}
For both proxies, Table~\ref{tab:headroom} scores the model with Model/Naive: the model's squared
error against always predicting the training-set mean $\overline{\sigma^{2\star}}$, equivalently
$1-R^2$:
\begin{equation}
\mathrm{Model/Naive} \;=\; \frac{\sum_{w\in\mathcal{W}}\big(\hat\sigma^2_w-\sigma^{2\star}_w\big)^2}
{\sum_{w\in\mathcal{W}}\big(\sigma^{2\star}_w-\overline{\sigma^{2\star}}\big)^2}
\;=\; 1-R^2 .
\end{equation}

\paragraph{QLIKE.}
The standard robust loss for a noisy realized-variance proxy \citep{patton2011qlike}: writing
$\sigma^{2\star}=\sigma^2(p^\star_{1:H,w})$ and $\hat\sigma^2=\sigma^2(\hat p_{1:H,w})$ for the realized
and predicted variance of a window (prediction again averaged over the $N$ test-time rollouts),
\begin{equation}
\mathrm{QLIKE} \;=\; \frac{1}{|\mathcal{W}|}\sum_{w\in\mathcal{W}}
\left(\frac{\sigma^{2\star}_w}{\hat\sigma^2_w} - \log\frac{\sigma^{2\star}_w}{\hat\sigma^2_w} - 1\right),
\end{equation}
which is $0$ at $\hat\sigma^2_w=\sigma^{2\star}_w$ and grows asymmetrically faster as
$\hat\sigma^2_w\to 0$ (under-prediction) than as $\hat\sigma^2_w\to\infty$ (over-prediction). As with
$\sigma^2$-MAE, both terms are winsorized at the $99$th percentile per (dataset, convention) cell
before averaging.

\paragraph{Regime-match volatility, $\mathrm{vol}(\cdot)$.}
Used to define the regime-matched slice (Table~\ref{tab:matchrule}, Section~\ref{sec:dataeff}).
For a period $\mathcal{P}$ (train-first-$10\%$, train-last-$10\%$, validation, or test) let
$r_t^{(s)}=\log(p_t^{(s)}/p_{t-1}^{(s)})$ be the close-price log-return of symbol $s$ at bar $t$, pooled
over all symbols and bars in $\mathcal{P}$, with mean $\bar r_{\mathcal{P}}$. Then
\begin{equation}
\mathrm{vol}(\mathcal{P}) \;=\; \sqrt{\frac{1}{|\mathcal{P}|}\sum_{(s,t)\in\mathcal{P}}
\big(r_t^{(s)}-\bar r_{\mathcal{P}}\big)^2},
\end{equation}
where $\bar r_{\mathcal{P}}$ is the mean return over the same observations. We do not annualize this quantity because only within-dataset ratios of $\mathrm{vol}(\cdot)$ are used, so the common annualization factor would cancel.

\paragraph{Return IC / Return RankIC.}
The end-of-window simple return is $\hat r_{s,t}=\hat p_{H,s,t}/p_{0,s,t}-1$ for the prediction and
$r^\star_{s,t}=p^\star_{H,s,t}/p_{0,s,t}-1$ for the realized path. 
Both metrics correlate these returns \emph{cross-sectionally} over the symbols $S_t$
with a valid window anchored at test timestamp $t$ (a calendar date for daily equity bars,
a 15-min bar for crypto), then average over timestamps:
\begin{align}
\mathrm{IC}_t &= \mathrm{Pearson}\big(\{\hat r_{s,t}\}_{s\in S_t},\,\{r^\star_{s,t}\}_{s\in S_t}\big),
& \mathrm{IC} &= \frac{1}{|T|}\sum_{t\in T}\mathrm{IC}_t ,\\
\mathrm{RankIC}_t &= \mathrm{Spearman}\big(\{\hat r_{s,t}\}_{s\in S_t},\,\{r^\star_{s,t}\}_{s\in S_t}\big),
& \mathrm{RankIC} &= \frac{1}{|T|}\sum_{t\in T}\mathrm{RankIC}_t .
\end{align}

\paragraph{ACF$^2$ gap.}
Let $\hat\rho_{k,w}$ and $\rho^\star_{k,w}$ be the lag-$k$ sample autocorrelations within window $w$ (Eq.~\ref{eq:rho}).
The gap squares the per-lag discrepancy, averages over the $K$ lags, then over windows:
\begin{equation}
\mathrm{ACF}^2\text{-gap} \;=\; \frac{1}{|\mathcal{W}|}\sum_{w\in\mathcal{W}} \frac{1}{K}\sum_{k=1}^{K}
\big(\hat\rho_{k,w}-\rho^\star_{k,w}\big)^2 .
\end{equation}
The training loss differs, where the ACFs are averaged over a batch before the squared difference is taken.

\section{Limitations}
\label{sec:limitations}

\textbf{First}, VCA's advantage appears conditional on a well-pre-trained backbone, and the gains largely disappear on Chronos-t5-small, whose zero-shot RankIC is an order of magnitude lower.

\textbf{Second}, although VCA is strongest on average, its gains are not consistent across benchmarks, and the scaling trend remains unclear, leaving room for future work.

\begin{table}[ht]
\centering
\caption{Dataset details. Equity universes are the survivorship-free point-in-time membership union sourced
via Qlib, and Crypto is from the Binance API.}
\label{tab:data}
\resizebox{\textwidth}{!}{
\begin{tabular}{llcclc}
\toprule
{Dataset} & {Source} & {Freq.} & \#{Symbols} & {Splits} & \#{Samples} ($n_\text{tr}/n_\text{val}/n_\text{te}$) \\
\midrule
Crypto top-20 & Binance API & 15-min & 20 & 2021--23 / 2024H1 / 2024H2--25 & 63514 / 10860 / 19308 \\
CSI300        & Qlib & daily & 515 & 2015--17 / 2018 / 2019--20 & 3991 / 1836 / 3557 \\
CSI500        & Qlib & daily & 937 & 2015--17 / 2018 / 2019--20 & 6246 / 2991 / 5749 \\
\bottomrule
\end{tabular}
}
\end{table}

\begin{table}[ht]
\centering
\caption{Hyperparameter details.}
\label{tab:hyper}
\begin{tabular}{lll}
\toprule
& Setting & Value \\
\midrule
\multirow{2}{*}{Backbone}
 & Model & Kronos-base ($102$M), decoder-only \\
 & Adaptation & full fine-tuning, no adapters \\
\midrule
\multirow{7}{*}{Optimization}
 & Epochs & $10$ \\
 & Learning rate & $5\times10^{-5}$ \\
 & Optimizer & $\beta=(0.9,\,0.95)$, AdamW~\citep{loshchilov2017decoupled} \\
 & Weight decay & $0.1$ \\
 & Gradient clipping & $3.0$ \\
 & Seeds & $\{1,2,3\}$ \\
 & Effective batch & $32$ \\
\midrule
\multirow{5}{*}{Data}
 & Lookback $W$ & $160$ bars \\
 & Horizon $H$ & $\{8,16,32,48\}$ \\
 & Max context & $512$ \\
 & Return clipping & $\pm5.0\sigma$ \\
 & Eval stride & $16$ bars \\
\midrule
\multirow{6}{*}{VCA objective}
 & Weight $\lambda_{\mathrm{acf}}$ & $10$ \\
 & Lags $K$ & $1$ \\
 & Lag weights & $w_k=e^{-k/\gamma}$, $\gamma=2.0$ \\
 & Target form & aggregate (batch-mean ACF matched) \\
 & Rollout & fully differentiable autoregressive \\
 & Gumbel-softmax & straight-through, $\tau$ decay $0.5\!\to\!0.1$ \\
\midrule
\multirow{2}{*}{Selection}
 & CE & lowest teacher-forced validation CE \\
 & VCA, MSE & lowest AR-validation aggregate ACF$^2$ gap \\
\midrule
\multirow{3}{*}{Evaluation}
 & \textsc{fore} convention & $T{=}0.6$, $N{=}10$ rollouts \\
 & \textsc{vol} convention & $T{=}0.9$, $N{=}1$ rollout \\
 & ACF lags (metric) & $15$ \\
\bottomrule
\end{tabular}

\end{table}

\begin{table}[h]
\centering
\caption{Headroom on the frozen backbone at $H{=}16$, model error against a naive baseline. 
ACF$^2$: model gap vs.\ a $\hat\rho{=}0$ (no-clustering) floor. 
Vol.\ level (close/Parkinson): model realized-variance error vs.\ always predicting the training-set mean. Detailed metrics are in Appendix~\ref{app:metrics}.}
\label{tab:headroom}
\begin{tabular}{lcc|cc|cc}
\toprule
& \multicolumn{2}{c}{ACF$^2$ gap} & \multicolumn{2}{c}{Vol.\ level (close)} & \multicolumn{2}{c}{Vol.\ level (Parkinson)} \\
\cmidrule(lr){2-3}\cmidrule(lr){4-5}\cmidrule(lr){6-7}
Dataset & gap & Model/Naive & $R^2$ & Model/Naive & $R^2$ & Model/Naive \\
\midrule
CSI300 & \cellcolor{green!15} 0.0392 & \cellcolor{green!15} 1.96$\times$ & $-4.62$ & 5.62$\times$ & $-0.70$ & 1.70$\times$ \\
CSI500 & \cellcolor{green!15} 0.0394 & \cellcolor{green!15} 1.95$\times$ & $-3.17$ & 4.17$\times$ & $-1.60$ & 2.60$\times$ \\
Crypto & \cellcolor{green!15} 0.0417 & \cellcolor{green!15} 1.93$\times$ & $+0.02$ & 0.98$\times$ & $+0.17$ & 0.83$\times$ \\
\bottomrule
\end{tabular}

\end{table}

\begin{table}[h]
\centering
\caption{Secondary comparison, QLIKE, computed under the \textsc{fore} sampling convention. Frozen's row is the raw QLIKE; every other row is a relative delta against Frozen, $\Delta_\text{QLIKE}\%$ signed so positive means improvement, with the last column averaging over horizons. Bold marks the best fine-tuned method in each column, and \textbf{WR} is the win rate over the 4 per-horizon columns to its left.}
\label{tab:mainqlike}
\begin{tabular}{clrrrr|rc}
\toprule
 &  & $H{=}8$ & $H{=}16$ & $H{=}32$ & $H{=}48$ & average over $H$ & \\
\cmidrule(lr){3-6} \cmidrule(lr){7-7}
DS & Method & $\Delta_\text{QLIKE}$ & $\Delta_\text{QLIKE}$ & $\Delta_\text{QLIKE}$ & $\Delta_\text{QLIKE}$ & $\Delta_\text{QLIKE}$ & WR \\
\midrule
\multirow{6}{*}{\rotatebox[origin=c]{90}{CSI300}} & Frozen & 3.13 & 3.36 & 4.65 & 6.04 & - & - \\
 & GARCH & +84.3 & +88.5 & +92.6 & +94.8 & +90.1 & - \\
 & HAR-RV & +84.3 & +88.6 & +93.2 & +95.3 & +90.3 & - \\
\cmidrule(lr){2-8}
 & CE & -27.0 & -49.4 & -90.5 & -69.9 & -59.2 & 0/4 \\
 & MSE & -257.1 & -128.4 & -97.1 & -14.7 & -124.3 & 0/4 \\
 & \cellcolor{green!15} \textbf{VCA} & \cellcolor{green!15} \textbf{-3.7} & \cellcolor{green!15} \textbf{+7.8} & \cellcolor{green!15} \textbf{-11.4} & \cellcolor{green!15} \textbf{+46.3} & \cellcolor{green!15} \textbf{+9.8} & \cellcolor{green!15} \textbf{4/4} \\
\midrule
\multirow{6}{*}{\rotatebox[origin=c]{90}{CSI500}} & Frozen & 3.32 & 3.66 & 4.91 & 6.35 & - & - \\
 & GARCH & +84.1 & +88.5 & +92.3 & +94.4 & +89.8 & - \\
 & HAR-RV & +83.1 & +87.9 & +92.6 & +94.8 & +89.6 & - \\
\cmidrule(lr){2-8}
 & CE & -3.4 & \textbf{+5.5} & -52.9 & -7.2 & -14.5 & 1/4 \\
 & MSE & -184.9 & -89.7 & -65.7 & -16.6 & -89.2 & 0/4 \\
 & \cellcolor{green!15} \textbf{VCA} & \cellcolor{green!15} \textbf{+7.4} & \cellcolor{green!15} +4.2 & \cellcolor{green!15} \textbf{+18.6} & \cellcolor{green!15} \textbf{+28.9} & \cellcolor{green!15} \textbf{+14.8} & \cellcolor{green!15} \textbf{3/4} \\
\midrule
\multirow{6}{*}{\rotatebox[origin=c]{90}{Crypto}} & Frozen & 4.84 & 6.77 & 10.14 & 11.72 & - & - \\
 & GARCH & +91.5 & +94.8 & +96.9 & +97.3 & +95.1 & - \\
 & HAR-RV & +88.3 & +92.7 & +95.5 & +96.1 & +93.1 & - \\
\cmidrule(lr){2-8}
 & \cellcolor{green!15} \textbf{CE} & \cellcolor{green!15} \textbf{+47.5} & \cellcolor{green!15} +47.8 & \cellcolor{green!15} \textbf{+52.6} & \cellcolor{green!15} \textbf{+47.7} & \cellcolor{green!15} \textbf{+48.9} & \cellcolor{green!15} \textbf{3/4} \\
 & MSE & -5.6 & -1.3 & -787.4 & +14.8 & -194.9 & 0/4 \\
 & VCA & +27.3 & \textbf{+61.8} & +38.3 & +0.2 & +31.9 & 1/4 \\
\bottomrule
\end{tabular}

\end{table}

\begin{table}[h]
\centering
\caption{Same layout as Table~\ref{tab:main}, with $\sigma^2$-MAE recomputed under \textsc{vol} in place of \textsc{fore}. RankIC is unchanged from Table~\ref{tab:main}, since it is computed once from the \textsc{fore} rollout and shared across conventions; only $\Delta_\text{MAE}$ and the resulting Score differ. Frozen row is the same as in Table~\ref{tab:main}.}
\label{tab:volgrid}
\setlength{\tabcolsep}{2pt}
\resizebox{\textwidth}{!}{%
\begin{tabular}{clrrrrrrrr|rrrc}
\toprule
 &  & \multicolumn{2}{c}{$H{=}8$} & \multicolumn{2}{c}{$H{=}16$} & \multicolumn{2}{c}{$H{=}32$} & \multicolumn{2}{c}{$H{=}48$} & \multicolumn{3}{c}{average over $H$} & \\
\cmidrule(lr){3-4} \cmidrule(lr){5-6} \cmidrule(lr){7-8} \cmidrule(lr){9-10} \cmidrule(lr){11-13}
DS & Method & $\Delta_\text{RankIC}$ & $\Delta_\text{MAE}$ & $\Delta_\text{RankIC}$ & $\Delta_\text{MAE}$ & $\Delta_\text{RankIC}$ & $\Delta_\text{MAE}$ & $\Delta_\text{RankIC}$ & $\Delta_\text{MAE}$ & $\Delta_\text{RankIC}$ & $\Delta_\text{MAE}$ & Score$_{\text{MAE}}$ & WR \\
\midrule
\multirow{6}{*}{\rotatebox[origin=c]{90}{CSI300}} & Frozen & +0.291 & 3.4e-3 & +0.317 & 6.1e-3 & +0.333 & 1.3e-2 & +0.333 & 2.0e-2 & - & - & - & - \\
 & GARCH & - & -11.1 & - & -22.4 & - & -22.1 & - & -17.3 & - & -18.2 & - & - \\
 & HAR-RV & - & -6.8 & - & -13.7 & - & -4.7 & - & +1.8 & - & -5.8 & - & - \\
\cmidrule(lr){2-14}
 & CE & -12.6 & +8.7 & \textbf{+7.5} & \textbf{-1.0} & -11.8 & -8.3 & -8.8 & -9.8 & -6.4 & \textbf{-2.6} & -4.5 & 2/8 \\
 & MSE & -16.8 & -1.9 & +5.9 & -15.6 & -6.5 & -19.7 & \textbf{-1.7} & -12.8 & -4.8 & -12.5 & -8.7 & 1/8 \\
 & \cellcolor{green!15} \textbf{VCA} & \cellcolor{green!15} \textbf{-6.0} & \cellcolor{green!15} \textbf{+8.8} & \cellcolor{green!15} +3.6 & \cellcolor{green!15} -32.6 & \cellcolor{green!15} \textbf{+8.0} & \cellcolor{green!15} \textbf{-2.6} & \cellcolor{green!15} -6.2 & \cellcolor{green!15} \textbf{-2.8} & \cellcolor{green!15} \textbf{-0.2} & \cellcolor{green!15} -7.3 & \cellcolor{green!15} \textbf{-3.8} & \cellcolor{green!15} \textbf{5/8} \\
\midrule
\multirow{6}{*}{\rotatebox[origin=c]{90}{CSI500}} & Frozen & +0.849 & 4.5e-3 & +0.853 & 8.4e-3 & +0.864 & 1.7e-2 & +0.880 & 2.7e-2 & - & - & - & - \\
 & GARCH & - & -10.9 & - & -22.1 & - & -22.0 & - & -17.9 & - & -18.2 & - & - \\
 & HAR-RV & - & -12.6 & - & -16.9 & - & -9.6 & - & -2.3 & - & -10.3 & - & - \\
\cmidrule(lr){2-14}
 & CE & -3.4 & \textbf{+6.2} & -4.5 & \textbf{+3.9} & \textbf{-0.6} & \textbf{-3.4} & -8.3 & \textbf{-2.4} & -4.2 & \textbf{+1.1} & \textbf{-1.6} & \textbf{5/8} \\
 & MSE & -4.0 & -1.8 & -2.7 & -11.0 & -3.3 & -14.0 & -5.7 & -12.2 & -3.9 & -9.8 & -6.9 & 0/8 \\
 & \cellcolor{green!15} \textbf{VCA} & \cellcolor{green!15} \textbf{-0.6} & \cellcolor{green!15} -35.0 & \cellcolor{green!15} \textbf{+1.0} & \cellcolor{green!15} +2.6 & \cellcolor{green!15} -1.3 & \cellcolor{green!15} -4.4 & \cellcolor{green!15} \textbf{-4.9} & \cellcolor{green!15} -23.2 & \cellcolor{green!15} \textbf{-1.4} & \cellcolor{green!15} -15.0 & \cellcolor{green!15} -8.2 & \cellcolor{green!15} 3/8 \\
\midrule
\multirow{6}{*}{\rotatebox[origin=c]{90}{Crypto}} & Frozen & +0.055 & 1.2e-4 & +0.051 & 2.7e-4 & +0.051 & 5.8e-4 & +0.044 & 9.0e-4 & - & - & - & - \\
 & GARCH & - & -11.2 & - & -5.4 & - & -8.4 & - & -2.8 & - & -7.0 & - & - \\
 & HAR-RV & - & -24.8 & - & -20.8 & - & -18.8 & - & -17.2 & - & -20.4 & - & - \\
\cmidrule(lr){2-14}
 & CE & -12.0 & -1.8 & -14.3 & \textbf{+3.4} & -50.7 & \textbf{+7.4} & -11.3 & \textbf{+5.0} & -22.1 & \textbf{+3.5} & -9.3 & 3/8 \\
 & MSE & -42.1 & -9.1 & -5.7 & -10.1 & \textbf{-2.7} & -21.5 & +6.5 & -7.5 & -11.0 & -12.1 & -11.6 & 1/8 \\
 & \cellcolor{green!15} \textbf{VCA} & \cellcolor{green!15} \textbf{+3.4} & \cellcolor{green!15} \textbf{+0.6} & \cellcolor{green!15} \textbf{+7.3} & \cellcolor{green!15} -30.2 & \cellcolor{green!15} -6.1 & \cellcolor{green!15} -12.4 & \cellcolor{green!15} \textbf{+10.2} & \cellcolor{green!15} -8.7 & \cellcolor{green!15} \textbf{+3.7} & \cellcolor{green!15} -12.7 & \cellcolor{green!15} \textbf{-4.5} & \cellcolor{green!15} \textbf{4/8} \\
\bottomrule
\end{tabular}

}
\end{table}

\begin{table}[h]
\centering
\caption{Aggregate ACF$^2$ gap by horizon and dataset (lower is better). Bold marks the lowest (best) arm in each row. 
$^\dagger H{=}8$ gives only $7$ squared returns per window, short of the $K{=}15$ lags used here (needs at least $2K$ squared returns), reported for completeness.}
\label{tab:acf2}
\begin{tabular}{llrrr}
\toprule
$H$ & Dataset & CE & MSE & VCA \\
\midrule
\multirow{3}{*}{$8^\dagger$} & CSI300 & .05772{\scriptsize$\pm$.00042} & .05949{\scriptsize$\pm$.00082} & \cellcolor{green!15} \textbf{.05749{\scriptsize$\pm$.00028}} \\
 & CSI500 & .05651{\scriptsize$\pm$.00037} & .05821{\scriptsize$\pm$.00091} & \cellcolor{green!15} \textbf{.05419{\scriptsize$\pm$.00070}} \\
 & Crypto & \textbf{.06436{\scriptsize$\pm$.00007}} & .06773{\scriptsize$\pm$.00110} & \cellcolor{green!15} .06456{\scriptsize$\pm$.00006} \\
\midrule
\multirow{3}{*}{16} & CSI300 & \textbf{.03517{\scriptsize$\pm$.00015}} & .03663{\scriptsize$\pm$.00013} & \cellcolor{green!15} .03523{\scriptsize$\pm$.00053} \\
 & CSI500 & .03804{\scriptsize$\pm$.00020} & .04019{\scriptsize$\pm$.00061} & \cellcolor{green!15} \textbf{.03730{\scriptsize$\pm$.00045}} \\
 & Crypto & .03958{\scriptsize$\pm$.00008} & .04275{\scriptsize$\pm$.00091} & \cellcolor{green!15} \textbf{.03885{\scriptsize$\pm$.00026}} \\
\midrule
\multirow{3}{*}{32} & CSI300 & .01771{\scriptsize$\pm$.00004} & .01850{\scriptsize$\pm$.00014} & \cellcolor{green!15} \textbf{.01749{\scriptsize$\pm$.00003}} \\
 & CSI500 & .01773{\scriptsize$\pm$.00005} & .01843{\scriptsize$\pm$.00017} & \cellcolor{green!15} \textbf{.01755{\scriptsize$\pm$.00007}} \\
 & Crypto & \textbf{.01847{\scriptsize$\pm$.00003}} & .02076{\scriptsize$\pm$.00116} & \cellcolor{green!15} .01852{\scriptsize$\pm$.00021} \\
\midrule
\multirow{3}{*}{48} & CSI300 & .01466{\scriptsize$\pm$.00003} & .01489{\scriptsize$\pm$.00014} & \cellcolor{green!15} \textbf{.01417{\scriptsize$\pm$.00018}} \\
 & CSI500 & .01478{\scriptsize$\pm$.00008} & .01541{\scriptsize$\pm$.00001} & \cellcolor{green!15} \textbf{.01452{\scriptsize$\pm$.00036}} \\
 & Crypto & \textbf{.01692{\scriptsize$\pm$.00001}} & .01872{\scriptsize$\pm$.00056} & \cellcolor{green!15} .01754{\scriptsize$\pm$.00192} \\
\midrule
\multicolumn{2}{l}{\textbf{wins}} & 4/12 & 0/12 & \cellcolor{green!15} \textbf{8/12} \\
\multicolumn{2}{l}{\textbf{mean}} & .03264 & .03431 & \cellcolor{green!15} \textbf{.03228} \\
\bottomrule
\end{tabular}

\end{table}

\begin{table}[!htb]
\centering
\caption{Chronos backbone, $H{=}16, 32$. \textbf{Bold} indicates best score across fine-tuning methods, \underline{underlined} marks a cell where $\sigma^2$-MAE (\textsc{fore}) is orders of magnitude above Frozen; $\Delta$RankIC does not depend on that blown-up price scale. Full per-cell details in Tables~\ref{tab:chronos_raw_h16}--\ref{tab:chronos_raw_h32}.}
\label{tab:chronos_main}
\resizebox{\textwidth}{!}{
\begin{tabular}{clrrrrrrc}
\toprule
& & \multicolumn{3}{c}{$H{=}16$} & \multicolumn{3}{c}{$H{=}32$} & \\
\cmidrule(lr){3-5}\cmidrule(lr){6-8}
DS & Method & $\Delta_\text{RankIC}$ & $\Delta_\text{MAE}$ & Score & $\Delta_\text{RankIC}$ & $\Delta_\text{MAE}$ & Score & WR \\
\midrule
\multirow{4}{*}{\rotatebox[origin=c]{90}{CSI300}} & Frozen & +0.0181 & 7.4e-3 & - & +0.0042 & 1.6e-2 & - & - \\
 & CE & -70.7 & \textbf{-2.2} & -36.4 & \textbf{+364.3} & \textbf{-2.2} & \textbf{+181.1} & \textbf{4/6} \\
 & MSE & \textbf{+18.8} & -3.9 & \textbf{+7.4} & +181.0 & -2.5 & +89.2 & 2/6 \\
 & \cellcolor{green!15} \textbf{VCA} & \cellcolor{green!15} -248.4 & \cellcolor{green!15} \underline{-1649079.1} & \cellcolor{green!15} \underline{-824663.8} & \cellcolor{green!15} -512.9 & \cellcolor{green!15} \underline{-133064.2} & \cellcolor{green!15} \underline{-66788.5} & \cellcolor{green!15} 0/6 \\
\midrule
\multirow{4}{*}{\rotatebox[origin=c]{90}{CSI500}} & Frozen & +0.0537 & 9.9e-3 & - & +0.0534 & 2.2e-2 & - & - \\
 & CE & \textbf{-34.6} & \textbf{-0.6} & \textbf{-17.6} & -126.6 & \textbf{-0.7} & -63.7 & \textbf{4/6} \\
 & MSE & -235.9 & -2.9 & -119.4 & -241.8 & -2.0 & -121.9 & 0/6 \\
 & \cellcolor{green!15} \textbf{VCA} & \cellcolor{green!15} -91.2 & \cellcolor{green!15} \underline{-265545.6} & \cellcolor{green!15} \underline{-132818.4} & \cellcolor{green!15} \textbf{+138.8} & \cellcolor{green!15} -0.5 & \cellcolor{green!15} \textbf{+69.1} & \cellcolor{green!15} 2/6 \\
\midrule
\multirow{4}{*}{\rotatebox[origin=c]{90}{Crypto}} & Frozen & -0.0017 & 2.7e-4 & - & +0.0049 & 6.0e-4 & - & - \\
 & CE & \textbf{+970.6} & \textbf{-5.9} & \textbf{+482.3} & \textbf{+230.6} & -6.5 & \textbf{+112.0} & \textbf{5/6} \\
 & MSE & +747.1 & -8.9 & +369.1 & -173.5 & \textbf{-6.2} & -89.9 & 1/6 \\
 & \cellcolor{green!15} \textbf{VCA} & \cellcolor{green!15} +370.6 & \cellcolor{green!15} -6.7 & \cellcolor{green!15} +181.9 & \cellcolor{green!15} -77.5 & \cellcolor{green!15} \underline{-2512388.9} & \cellcolor{green!15} \underline{-1256233.2} & \cellcolor{green!15} 0/6 \\
\bottomrule
\end{tabular}

}
\end{table}


\begin{table}[h]
\centering
\caption{Same regime-match exercise as Table~\ref{tab:matchrule}, with the raw lag-1 sample ACF of
squared returns $\hat\rho_1(\cdot)$ in place of $\mathrm{vol}(\cdot)$. Bold marks a cell where $\hat\rho_1$ disagrees with
$\mathrm{vol}(\cdot)$'s match in Table~\ref{tab:matchrule}.}
\label{tab:matchrule_acf1}
\begin{tabular}{llccccc}
\toprule
Dataset & Slice & $\hat\rho_1$(first) & $\hat\rho_1$(last) & $\hat\rho_1$(test) & $\hat\rho_1$(val) & test-closer / val-closer \\
\midrule
\multirow{2}{*}{CSI300} & 10\% & +0.099 & +0.102 & +0.041 & +0.054 & first / first \\
 & 1\% & +0.070 & +0.058 & +0.041 & +0.054 & \textbf{last} / \textbf{last} \\
\midrule
\multirow{2}{*}{CSI500} & 10\% & +0.172 & +0.127 & +0.047 & +0.050 & \textbf{last} / \textbf{last} \\
 & 1\% & +0.148 & +0.074 & +0.047 & +0.050 & last / last \\
\midrule
\multirow{2}{*}{Crypto} & 10\% & +0.306 & +0.134 & +0.207 & +0.210 & last / last \\
 & 1\% & +0.247 & +0.179 & +0.207 & +0.210 & last / last \\
\bottomrule
\end{tabular}

\end{table}

\begin{table}[h]
\centering
\caption{Full per-cell numbers behind Figure~\ref{fig:dataeff}, under both evaluation conventions. Bold marks the best
method in each column within a (convention, dataset). Shaded cells indicate \emph{matched regime}.}
\label{tab:dataeff}
\begin{tabular}{cllccccc}
\toprule
& Dataset & Method & $1\%$-first & $1\%$-last & $10\%$-first & $10\%$-last & $100\%$ \\
\midrule
\multirow{9}{*}{\rotatebox{90}{RankIC}}
& CSI300 & CE & \cellcolor{green!15} \textbf{+1.6{\scriptsize$\pm$0.2}} & \textbf{-18.3{\scriptsize$\pm$0.4}} & \cellcolor{green!15} \textbf{-0.1{\scriptsize$\pm$1.1}} & \textbf{+0.7{\scriptsize$\pm$1.1}} & -11.8{\scriptsize$\pm$1.1} \\
& & MSE & \cellcolor{green!15} -16.5{\scriptsize$\pm$1.7} & -32.9{\scriptsize$\pm$4.5} & \cellcolor{green!15} -22.0{\scriptsize$\pm$6.4} & -34.7{\scriptsize$\pm$4.1} & -6.5{\scriptsize$\pm$1.5} \\
& & VCA & \cellcolor{green!15} -1.3{\scriptsize$\pm$1.3} & -23.0{\scriptsize$\pm$4.2} & \cellcolor{green!15} -0.4{\scriptsize$\pm$12.4} & -28.7{\scriptsize$\pm$9.7} & \textbf{+8.0{\scriptsize$\pm$1.0}} \\
\cmidrule(lr){2-8}
& CSI500 & CE & \cellcolor{green!15} \textbf{+2.0{\scriptsize$\pm$0.1}} & -5.9{\scriptsize$\pm$0.1} & \cellcolor{green!15} -0.1{\scriptsize$\pm$0.1} & -3.6{\scriptsize$\pm$0.1} & \textbf{-0.6{\scriptsize$\pm$0.1}} \\
& & MSE & \cellcolor{green!15} -5.1{\scriptsize$\pm$0.6} & -10.2{\scriptsize$\pm$0.1} & \cellcolor{green!15} -6.5{\scriptsize$\pm$0.8} & -10.0{\scriptsize$\pm$0.2} & -3.3{\scriptsize$\pm$0.8} \\
& & VCA & \cellcolor{green!15} +1.0{\scriptsize$\pm$1.0} & \textbf{-0.5{\scriptsize$\pm$2.6}} & \cellcolor{green!15} \textbf{+0.2{\scriptsize$\pm$1.1}} & \textbf{-1.8{\scriptsize$\pm$3.0}} & -1.3{\scriptsize$\pm$2.1} \\
\cmidrule(lr){2-8}
& Crypto & CE & \textbf{-10.7{\scriptsize$\pm$4.0}} & \cellcolor{green!15} -1.0{\scriptsize$\pm$1.6} & \textbf{-8.7{\scriptsize$\pm$9.0}} & \cellcolor{green!15} -16.7{\scriptsize$\pm$2.9} & -50.7{\scriptsize$\pm$3.5} \\
& & MSE & -41.1{\scriptsize$\pm$8.4} & \cellcolor{green!15} -23.8{\scriptsize$\pm$10.7} & -25.8{\scriptsize$\pm$4.1} & \cellcolor{green!15} -17.3{\scriptsize$\pm$14.5} & \textbf{-2.7{\scriptsize$\pm$12.0}} \\
& & VCA & -15.2{\scriptsize$\pm$19.0} & \cellcolor{green!15} \textbf{+12.2{\scriptsize$\pm$13.5}} & -33.5{\scriptsize$\pm$4.6} & \cellcolor{green!15} \textbf{+7.0{\scriptsize$\pm$12.9}} & -6.1{\scriptsize$\pm$19.0} \\
\midrule
\multirow{9}{*}{\rotatebox{90}{\textsc{fore}}}
& CSI300 & CE & \cellcolor{green!15} -0.5{\scriptsize$\pm$0.1} & -10.2{\scriptsize$\pm$0.2} & \cellcolor{green!15} -0.8{\scriptsize$\pm$0.5} & \textbf{-4.3{\scriptsize$\pm$0.5}} & -11.8{\scriptsize$\pm$0.5} \\
& & MSE & \cellcolor{green!15} -15.5{\scriptsize$\pm$1.0} & -26.8{\scriptsize$\pm$2.5} & \cellcolor{green!15} -18.1{\scriptsize$\pm$2.3} & -27.2{\scriptsize$\pm$1.7} & -9.2{\scriptsize$\pm$0.9} \\
& & VCA & \cellcolor{green!15} \textbf{+1.3{\scriptsize$\pm$5.8}} & \textbf{-14.6{\scriptsize$\pm$2.1}} & \cellcolor{green!15} \textbf{+3.4{\scriptsize$\pm$3.8}} & -15.1{\scriptsize$\pm$4.9} & \textbf{+1.6{\scriptsize$\pm$1.3}} \\
\cmidrule(lr){2-8}
& CSI500 & CE & \cellcolor{green!15} +0.8{\scriptsize$\pm$0.0} & -6.0{\scriptsize$\pm$0.0} & \cellcolor{green!15} +1.9{\scriptsize$\pm$0.0} & -5.0{\scriptsize$\pm$0.1} & -3.3{\scriptsize$\pm$0.1} \\
& & MSE & \cellcolor{green!15} -8.7{\scriptsize$\pm$0.6} & -13.1{\scriptsize$\pm$0.4} & \cellcolor{green!15} -7.6{\scriptsize$\pm$1.1} & -13.6{\scriptsize$\pm$0.3} & -4.4{\scriptsize$\pm$1.2} \\
& & VCA & \cellcolor{green!15} \textbf{+1.3{\scriptsize$\pm$4.1}} & \textbf{-0.9{\scriptsize$\pm$2.3}} & \cellcolor{green!15} \textbf{+5.0{\scriptsize$\pm$1.7}} & \textbf{+1.5{\scriptsize$\pm$1.9}} & \textbf{+1.3{\scriptsize$\pm$0.7}} \\
\cmidrule(lr){2-8}
& Crypto & CE & \textbf{-2.7{\scriptsize$\pm$1.9}} & \cellcolor{green!15} +1.3{\scriptsize$\pm$0.8} & \textbf{-2.7{\scriptsize$\pm$4.5}} & \cellcolor{green!15} -6.4{\scriptsize$\pm$1.5} & -22.3{\scriptsize$\pm$1.7} \\
& & MSE & -22.3{\scriptsize$\pm$4.1} & \cellcolor{green!15} -10.7{\scriptsize$\pm$5.0} & -13.9{\scriptsize$\pm$2.2} & \cellcolor{green!15} -7.5{\scriptsize$\pm$7.1} & \textbf{-4.7{\scriptsize$\pm$6.6}} \\
& & VCA & -5.8{\scriptsize$\pm$9.6} & \cellcolor{green!15} \textbf{+8.7{\scriptsize$\pm$4.9}} & -13.8{\scriptsize$\pm$2.7} & \cellcolor{green!15} \textbf{+4.4{\scriptsize$\pm$6.3}} & -5.5{\scriptsize$\pm$11.1} \\
\midrule
\multirow{9}{*}{\rotatebox{90}{\textsc{vol}}}
& CSI300 & CE & \cellcolor{green!15} \textbf{+1.0{\scriptsize$\pm$0.1}} & \textbf{-8.6{\scriptsize$\pm$0.2}} & \cellcolor{green!15} -1.6{\scriptsize$\pm$1.2} & \textbf{-1.8{\scriptsize$\pm$0.6}} & -10.0{\scriptsize$\pm$0.5} \\
& & MSE & \cellcolor{green!15} -18.6{\scriptsize$\pm$1.0} & -30.4{\scriptsize$\pm$2.9} & \cellcolor{green!15} -21.7{\scriptsize$\pm$1.8} & -30.0{\scriptsize$\pm$1.4} & -13.1{\scriptsize$\pm$0.9} \\
& & VCA & \cellcolor{green!15} -1.3{\scriptsize$\pm$0.9} & -12.8{\scriptsize$\pm$1.8} & \cellcolor{green!15} \textbf{+2.5{\scriptsize$\pm$5.6}} & -13.6{\scriptsize$\pm$4.7} & \textbf{+2.7{\scriptsize$\pm$4.0}} \\
\cmidrule(lr){2-8}
& CSI500 & CE & \cellcolor{green!15} \textbf{+1.5{\scriptsize$\pm$0.2}} & -4.1{\scriptsize$\pm$0.0} & \cellcolor{green!15} -0.2{\scriptsize$\pm$0.1} & -2.4{\scriptsize$\pm$0.2} & \textbf{-2.0{\scriptsize$\pm$0.2}} \\
& & MSE & \cellcolor{green!15} -12.5{\scriptsize$\pm$0.8} & -16.7{\scriptsize$\pm$0.7} & \cellcolor{green!15} -11.5{\scriptsize$\pm$1.6} & -17.4{\scriptsize$\pm$0.7} & -8.7{\scriptsize$\pm$1.3} \\
& & VCA & \cellcolor{green!15} +0.1{\scriptsize$\pm$2.1} & \textbf{-1.2{\scriptsize$\pm$2.8}} & \cellcolor{green!15} \textbf{+3.1{\scriptsize$\pm$1.0}} & \textbf{+3.4{\scriptsize$\pm$3.1}} & -2.9{\scriptsize$\pm$4.1} \\
\cmidrule(lr){2-8}
& Crypto & CE & \textbf{-1.4{\scriptsize$\pm$1.6}} & \cellcolor{green!15} +2.9{\scriptsize$\pm$0.9} & \textbf{-2.4{\scriptsize$\pm$4.5}} & \cellcolor{green!15} -5.8{\scriptsize$\pm$1.5} & -21.7{\scriptsize$\pm$1.7} \\
& & MSE & -25.9{\scriptsize$\pm$3.7} & \cellcolor{green!15} -12.0{\scriptsize$\pm$5.0} & -17.9{\scriptsize$\pm$2.1} & \cellcolor{green!15} -9.4{\scriptsize$\pm$7.3} & -12.1{\scriptsize$\pm$7.3} \\
& & VCA & -4.7{\scriptsize$\pm$9.4} & \cellcolor{green!15} \textbf{+6.4{\scriptsize$\pm$3.4}} & -21.6{\scriptsize$\pm$10.6} & \cellcolor{green!15} +1.7{\scriptsize$\pm$6.6} & \textbf{-9.3{\scriptsize$\pm$12.8}} \\
\bottomrule
\end{tabular}

\end{table}

\begin{figure}[h]
\centering
\includegraphics[width=\textwidth]{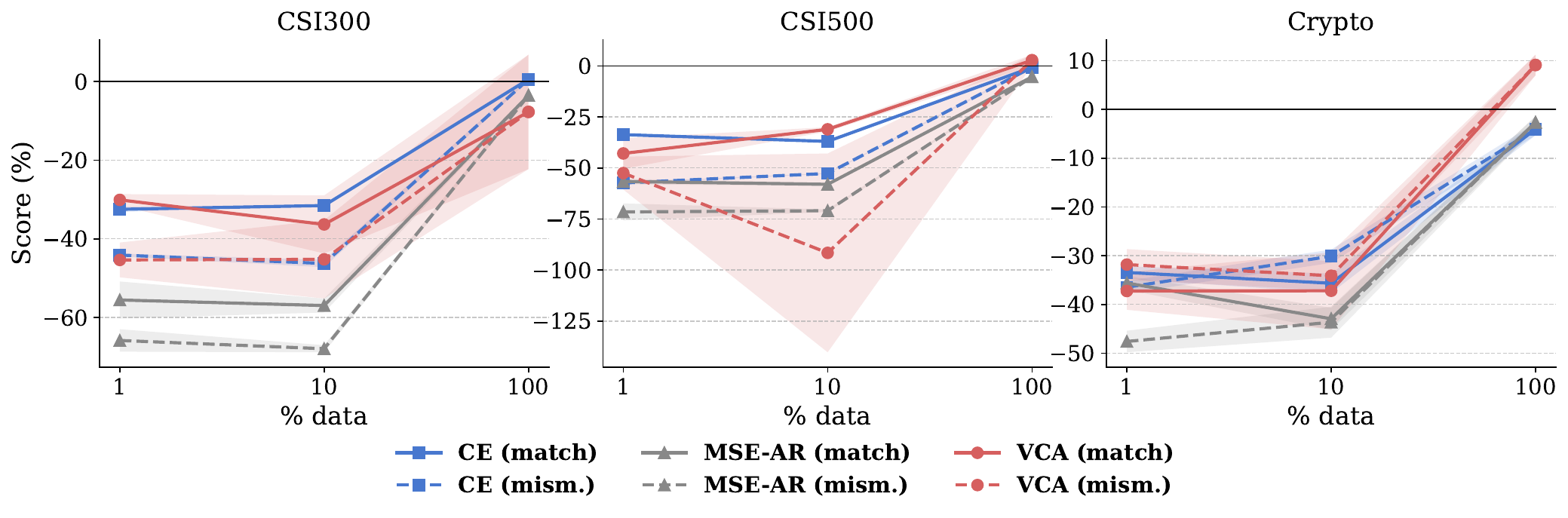}
\caption{Same layout as Figure~\ref{fig:dataeff}, at $H{=}16$: every fine-tuning method stays well below the frozen
model at $1\%$ and $10\%$ data, matched or mismatched, recovering only at $100\%$.}
\label{fig:dataeff16}
\end{figure}

\begin{table}[h]
\centering
\caption{Per-cell values for ablation at $H{=}16$, under \textsc{fore}, mean over three seeds. \emph{No setting is uniformly optimal across all three datasets}.}
\label{tab:abl}
\resizebox{\textwidth}{!}{
\begin{tabular}{llcccccc}
\toprule
Ablation & Value & \multicolumn{2}{c}{CSI300} & \multicolumn{2}{c}{CSI500} & \multicolumn{2}{c}{Crypto} \\
\cmidrule(lr){3-4}\cmidrule(lr){5-6}\cmidrule(lr){7-8}
 & & RankIC & $\sigma^2$-MAE & RankIC & $\sigma^2$-MAE & RankIC & $\sigma^2$-MAE \\
\midrule
\multirow[c]{5}{*}{Lags $K$} & \textbf{1} & \cellcolor{green!15} +0.3280 & \cellcolor{green!15} 0.00788 & \cellcolor{green!15} +0.8615 & \cellcolor{green!15} 0.00860 & \cellcolor{green!15} +0.0552 & \cellcolor{green!15} 0.00028 \\
 & 2 & +0.0480 & 0.00614 & +0.1242 & 0.00834 & +0.0147 & 0.00030 \\
 & 4 & +0.0447 & 0.00626 & +0.1437 & 0.00830 & +0.0106 & 0.00289 \\
 & 8 & +0.0818 & 0.00626 & +0.1582 & 0.00859 & +0.0199 & 0.00030 \\
\midrule
\multirow[c]{3}{*}{Lookback $W$} & 80 & +0.0902 & 0.00652 & +0.0774 & 0.00824 & +0.0072 & 0.00031 \\
 & \textbf{160} & \cellcolor{green!15} +0.3280 & \cellcolor{green!15} 0.00754 & \cellcolor{green!15} +0.8615 & \cellcolor{green!15} 0.00852 & \cellcolor{green!15} +0.0552 & \cellcolor{green!15} 0.00028 \\
 & 320 & +0.0773 & 0.01192 & +0.1930 & 0.00849 & +0.0073 & 0.00029 \\
\midrule
\multirow[c]{4}{*}{Weight $\lambda$} & 5 & +0.1312 & 0.00669 & +0.1178 & 0.00874 & +0.0092 & 0.00040 \\
 & \textbf{10} & \cellcolor{green!15} +0.3280 & \cellcolor{green!15} 0.00788 & \cellcolor{green!15} +0.8615 & \cellcolor{green!15} 0.00860 & \cellcolor{green!15} +0.0552 & \cellcolor{green!15} 0.00028 \\
 & 20 & +0.0557 & 0.00652 & +0.0405 & 0.00852 & +0.0176 & 0.00030 \\
 & 40 & +0.0478 & 0.00631 & +0.1343 & 0.00823 & +0.0191 & 0.00035 \\
\bottomrule
\end{tabular}

}
\end{table}

\begin{table}[h]
\centering
\caption{Same ablations as Table~\ref{tab:abl}, at $H{=}32$.}
\label{tab:abl_h32}
\resizebox{\textwidth}{!}{
\begin{tabular}{llcccccc}
\toprule
Ablation & Value & \multicolumn{2}{c}{CSI300} & \multicolumn{2}{c}{CSI500} & \multicolumn{2}{c}{Crypto} \\
\cmidrule(lr){3-4}\cmidrule(lr){5-6}\cmidrule(lr){7-8}
 & & RankIC & $\sigma^2$-MAE & RankIC & $\sigma^2$-MAE & RankIC & $\sigma^2$-MAE \\
\midrule
\multirow[c]{5}{*}{Lags $K$} & \textbf{1} & \cellcolor{green!15} +0.3596 & \cellcolor{green!15} 0.01501 & \cellcolor{green!15} +0.8522 & \cellcolor{green!15} 0.01882 & \cellcolor{green!15} +0.0482 & \cellcolor{green!15} 0.00071 \\
 & 2 & +0.3162 & 0.01494 & +0.8280 & 0.01955 & +0.0445 & 0.00066 \\
 & 4 & +0.2739 & 0.01441 & +0.8473 & 0.01813 & +0.0491 & 0.00065 \\
 & 8 & +0.2931 & 0.01414 & +0.8683 & 0.02492 & +0.0569 & 0.00069 \\
\midrule
\multirow[c]{3}{*}{Lookback $W$} & 80 & +0.3105 & 0.01427 & +0.8272 & 0.02659 & +0.0413 & 0.00058 \\
 & \textbf{160} & \cellcolor{green!15} +0.3596 & \cellcolor{green!15} 0.01454 & \cellcolor{green!15} +0.8522 & \cellcolor{green!15} 0.01858 & \cellcolor{green!15} +0.0482 & \cellcolor{green!15} 0.00071 \\
 & 320 & +0.2761 & 0.01604 & +0.8607 & 0.01942 & +0.0493 & 0.00070 \\
\midrule
\multirow[c]{4}{*}{Weight $\lambda$} & 5 & +0.3435 & 0.01489 & +0.8345 & 0.01784 & +0.0450 & 0.00065 \\
 & \textbf{10} & \cellcolor{green!15} +0.3596 & \cellcolor{green!15} 0.01501 & \cellcolor{green!15} +0.8522 & \cellcolor{green!15} 0.01882 & \cellcolor{green!15} +0.0482 & \cellcolor{green!15} 0.00071 \\
 & 20 & +0.2632 & 0.01378 & +0.8719 & 0.01877 & +0.0569 & 0.00065 \\
 & 40 & +0.0520 & 0.01364 & +0.1267 & 0.02624 & +0.0189 & 0.00066 \\
\bottomrule
\end{tabular}

}
\end{table}

\begin{table}
\centering
\caption{Relative weight drift $\|\theta-\theta_0\|_2/\|\theta_0\|_2$ from the frozen model, whole-model
(\emph{all}) weights, in units of $10^{-2}$, mean over three seeds. Ratio is VCA/CE. The $10\%$ rows (shaded)
are the regime-matched slice.}
\label{tab:drift}
\begin{tabular}{ll rrr rrr}
\toprule
& & \multicolumn{3}{c}{$H{=}16$} & \multicolumn{3}{c}{$H{=}32$} \\
\cmidrule(lr){3-5}\cmidrule(lr){6-8}
\%data & Dataset & CE & VCA & Ratio & CE & VCA & Ratio \\
\midrule
\multirow{3}{*}{100\%} & CSI300 & 2.34 & 3.38 & 1.4$\times$ & 0.90 & 3.78 & 4.2$\times$ \\
 & CSI500 & 4.70 & 5.77 & 1.2$\times$ & 1.38 & 4.95 & 3.6$\times$ \\
 & Crypto & 5.08 & 6.95 & 1.4$\times$ & 1.94 & 5.77 & 3.0$\times$ \\
\midrule
\multirow{3}{*}{10\%} & CSI300 & \cellcolor{green!15} 0.49 & \cellcolor{green!15} 0.85 & \cellcolor{green!15} 1.7$\times$ & \cellcolor{green!15} 0.60 & \cellcolor{green!15} 0.81 & \cellcolor{green!15} 1.3$\times$ \\
 & CSI500 & \cellcolor{green!15} 1.18 & \cellcolor{green!15} 1.29 & \cellcolor{green!15} 1.1$\times$ & \cellcolor{green!15} 0.96 & \cellcolor{green!15} 1.08 & \cellcolor{green!15} 1.1$\times$ \\
 & Crypto & \cellcolor{green!15} 1.57 & \cellcolor{green!15} 1.68 & \cellcolor{green!15} 1.1$\times$ & \cellcolor{green!15} 1.39 & \cellcolor{green!15} 1.43 & \cellcolor{green!15} 1.0$\times$ \\
\midrule
\multirow{3}{*}{1\%} & CSI300 & 0.54 & 0.42 & 0.8$\times$ & 0.25 & 0.45 & 1.8$\times$ \\
 & CSI500 & 0.71 & 0.38 & 0.5$\times$ & 0.42 & 0.50 & 1.2$\times$ \\
 & Crypto & 0.37 & 0.55 & 1.5$\times$ & 0.35 & 0.39 & 1.1$\times$ \\
\bottomrule
\end{tabular}

\end{table}

\begin{table}[p]
\centering
\scriptsize
\caption{Raw per-cell results at $H{=}8$, mean$\pm$sd over 3 seeds. No quantity is normalized against the frozen model. $\sigma^2$-MAE columns are reported to 5 decimal places; all other columns to 4.}
\label{tab:raw8}
\setlength{\tabcolsep}{3pt}
\resizebox{\textwidth}{!}{%
\begin{tabular}{llccccccc}
\toprule
DS & Arm & price RankIC & price IC & ret.\ RankIC & ret.\ IC & ACF$^2$ gap & $\sigma^2$-MAE\textsubscript{\textsc{fore}} & $\sigma^2$-MAE\textsubscript{\textsc{vol}} \\
\midrule
\multirow{4}{*}{\rotatebox[origin=c]{90}{CSI300}} & Frozen & +0.2906 & +0.0630 & +0.0537 & +0.0142 & 0.05812 & 0.00345 & 0.00342 \\
 & CE & +0.2539 & +0.0044 & -0.0092 & -0.0100 & 0.05772 & 0.00322 & 0.00313 \\
 & MSE & +0.2418 & -0.0096 & +0.0031 & -0.0025 & 0.05949 & 0.00363 & 0.00349 \\
 & \cellcolor{green!15} \textbf{VCA} & \cellcolor{green!15} +0.2731 & \cellcolor{green!15} +0.0331 & \cellcolor{green!15} +0.0115 & \cellcolor{green!15} -0.0067 & \cellcolor{green!15} 0.05749 & \cellcolor{green!15} 0.00310 & \cellcolor{green!15} 0.00312 \\
\midrule
\multirow{4}{*}{\rotatebox[origin=c]{90}{CSI500}} & Frozen & +0.8485 & +0.0985 & +0.0411 & +0.0444 & 0.05537 & 0.00446 & 0.00448 \\
 & CE & +0.8196 & -0.0787 & +0.0168 & +0.0185 & 0.05651 & 0.00419 & 0.00420 \\
 & MSE & +0.8148 & -0.1105 & -0.0416 & -0.0551 & 0.05821 & 0.00469 & 0.00456 \\
 & \cellcolor{green!15} \textbf{VCA} & \cellcolor{green!15} +0.8437 & \cellcolor{green!15} +0.0569 & \cellcolor{green!15} +0.0122 & \cellcolor{green!15} +0.0030 & \cellcolor{green!15} 0.05419 & \cellcolor{green!15} 0.00472 & \cellcolor{green!15} 0.00605 \\
\midrule
\multirow{4}{*}{\rotatebox[origin=c]{90}{Crypto}} & Frozen & +0.0546 & +0.0178 & +0.0127 & +0.0104 & 0.06489 & 0.00014 & 0.00012 \\
 & CE & +0.0480 & +0.0099 & +0.0089 & +0.0127 & 0.06436 & 0.00013 & 0.00013 \\
 & MSE & +0.0316 & +0.0057 & +0.0120 & +0.0140 & 0.06773 & 0.00014 & 0.00014 \\
 & \cellcolor{green!15} \textbf{VCA} & \cellcolor{green!15} +0.0564 & \cellcolor{green!15} +0.0195 & \cellcolor{green!15} +0.0148 & \cellcolor{green!15} +0.0130 & \cellcolor{green!15} 0.06456 & \cellcolor{green!15} 0.00013 & \cellcolor{green!15} 0.00012 \\
\bottomrule
\end{tabular}

}
\end{table}

\begin{table}[p]
\centering
\scriptsize
\caption{Raw per-cell results at $H{=}16$, mean$\pm$sd over 3 seeds. No quantity is normalized against the frozen model. $\sigma^2$-MAE columns are reported to 5 decimal places; all other columns to 4.}
\label{tab:raw16}
\setlength{\tabcolsep}{3pt}
\resizebox{\textwidth}{!}{%
\begin{tabular}{llccccccc}
\toprule
DS & Arm & price RankIC & price IC & ret.\ RankIC & ret.\ IC & ACF$^2$ gap & $\sigma^2$-MAE\textsubscript{\textsc{fore}} & $\sigma^2$-MAE\textsubscript{\textsc{vol}} \\
\midrule
\multirow{4}{*}{\rotatebox[origin=c]{90}{CSI300}} & Frozen & +0.3167 & +0.0817 & +0.0240 & -0.0304 & 0.03641 & 0.00662 & 0.00612 \\
 & CE & +0.3405 & +0.1118 & +0.0304 & +0.0183 & 0.03517 & 0.00704 & 0.00618 \\
 & MSE & +0.3352 & +0.1045 & +0.0424 & +0.0463 & 0.03663 & 0.00747 & 0.00708 \\
 & \cellcolor{green!15} \textbf{VCA} & \cellcolor{green!15} +0.3280 & \cellcolor{green!15} +0.0990 & \cellcolor{green!15} +0.0219 & \cellcolor{green!15} +0.0030 & \cellcolor{green!15} 0.03523 & \cellcolor{green!15} 0.00788 & \cellcolor{green!15} 0.00812 \\
\midrule
\multirow{4}{*}{\rotatebox[origin=c]{90}{CSI500}} & Frozen & +0.8529 & +0.1210 & +0.0642 & +0.0314 & 0.03759 & 0.00901 & 0.00840 \\
 & CE & +0.8146 & -0.1335 & +0.0138 & +0.0373 & 0.03804 & 0.00873 & 0.00807 \\
 & MSE & +0.8300 & -0.0373 & +0.0219 & +0.0688 & 0.04019 & 0.00973 & 0.00932 \\
 & \cellcolor{green!15} \textbf{VCA} & \cellcolor{green!15} +0.8615 & \cellcolor{green!15} +0.1723 & \cellcolor{green!15} +0.0480 & \cellcolor{green!15} +0.0438 & \cellcolor{green!15} 0.03730 & \cellcolor{green!15} 0.00860 & \cellcolor{green!15} 0.00819 \\
\midrule
\multirow{4}{*}{\rotatebox[origin=c]{90}{Crypto}} & Frozen & +0.0514 & +0.0139 & +0.0147 & +0.0053 & 0.04047 & 0.00031 & 0.00027 \\
 & CE & +0.0441 & +0.0042 & +0.0078 & +0.0136 & 0.03958 & 0.00029 & 0.00026 \\
 & MSE & +0.0485 & +0.0079 & +0.0083 & +0.0120 & 0.04275 & 0.00031 & 0.00030 \\
 & \cellcolor{green!15} \textbf{VCA} & \cellcolor{green!15} +0.0552 & \cellcolor{green!15} +0.0167 & \cellcolor{green!15} +0.0143 & \cellcolor{green!15} +0.0067 & \cellcolor{green!15} 0.03885 & \cellcolor{green!15} 0.00028 & \cellcolor{green!15} 0.00035 \\
\bottomrule
\end{tabular}

}
\end{table}

\begin{table}[p]
\centering
\scriptsize
\caption{Raw per-cell results at $H{=}32$, mean$\pm$sd over 3 seeds. No quantity is normalized against the frozen model. $\sigma^2$-MAE columns are reported to 5 decimal places; all other columns to 4.}
\label{tab:raw32}
\setlength{\tabcolsep}{3pt}
\resizebox{\textwidth}{!}{%
\begin{tabular}{llccccccc}
\toprule
DS & Arm & price RankIC & price IC & ret.\ RankIC & ret.\ IC & ACF$^2$ gap & $\sigma^2$-MAE\textsubscript{\textsc{fore}} & $\sigma^2$-MAE\textsubscript{\textsc{vol}} \\
\midrule
\multirow{4}{*}{\rotatebox[origin=c]{90}{CSI300}} & Frozen & +0.3328 & +0.1078 & +0.0164 & -0.0309 & 0.01848 & 0.01430 & 0.01260 \\
 & CE & +0.2936 & +0.0496 & +0.0380 & +0.0517 & 0.01771 & 0.01600 & 0.01360 \\
 & MSE & +0.3111 & +0.0759 & +0.0163 & +0.0262 & 0.01850 & 0.01600 & 0.01510 \\
 & \cellcolor{green!15} \textbf{VCA} & \cellcolor{green!15} +0.3596 & \cellcolor{green!15} +0.1472 & \cellcolor{green!15} +0.0595 & \cellcolor{green!15} +0.0507 & \cellcolor{green!15} 0.01749 & \cellcolor{green!15} 0.01500 & \cellcolor{green!15} 0.01290 \\
\midrule
\multirow{4}{*}{\rotatebox[origin=c]{90}{CSI500}} & Frozen & +0.8638 & +0.1946 & +0.0933 & +0.0321 & 0.01815 & 0.01960 & 0.01710 \\
 & CE & +0.8583 & +0.1606 & +0.0029 & -0.0089 & 0.01773 & 0.02080 & 0.01770 \\
 & MSE & +0.8349 & -0.0004 & -0.0273 & +0.0147 & 0.01843 & 0.02060 & 0.01950 \\
 & \cellcolor{green!15} \textbf{VCA} & \cellcolor{green!15} +0.8522 & \cellcolor{green!15} +0.1285 & \cellcolor{green!15} +0.0132 & \cellcolor{green!15} -0.0120 & \cellcolor{green!15} 0.01755 & \cellcolor{green!15} 0.01880 & \cellcolor{green!15} 0.01790 \\
\midrule
\multirow{4}{*}{\rotatebox[origin=c]{90}{Crypto}} & Frozen & +0.0513 & +0.0136 & +0.0031 & -0.0035 & 0.01965 & 0.00068 & 0.00058 \\
 & CE & +0.0253 & -0.0125 & -0.0125 & -0.0100 & 0.01847 & 0.00064 & 0.00054 \\
 & MSE & +0.0499 & +0.0108 & +0.0100 & +0.0018 & 0.02076 & 0.00073 & 0.00070 \\
 & \cellcolor{green!15} \textbf{VCA} & \cellcolor{green!15} +0.0482 & \cellcolor{green!15} +0.0091 & \cellcolor{green!15} -0.0024 & \cellcolor{green!15} -0.0031 & \cellcolor{green!15} 0.01852 & \cellcolor{green!15} 0.00072 & \cellcolor{green!15} 0.00065 \\
\bottomrule
\end{tabular}

}
\end{table}

\begin{table}[p]
\centering
\scriptsize
\caption{Raw per-cell results at $H{=}48$, mean$\pm$sd over 3 seeds. No quantity is normalized against the frozen model. $\sigma^2$-MAE columns are reported to 5 decimal places; all other columns to 4.}
\label{tab:raw48}
\setlength{\tabcolsep}{3pt}
\resizebox{\textwidth}{!}{%
\begin{tabular}{llccccccc}
\toprule
DS & Arm & price RankIC & price IC & ret.\ RankIC & ret.\ IC & ACF$^2$ gap & $\sigma^2$-MAE\textsubscript{\textsc{fore}} & $\sigma^2$-MAE\textsubscript{\textsc{vol}} \\
\midrule
\multirow{4}{*}{\rotatebox[origin=c]{90}{CSI300}} & Frozen & +0.3329 & +0.1110 & -0.0155 & -0.0548 & 0.01618 & 0.02290 & 0.01970 \\
 & CE & +0.3035 & +0.0656 & +0.0150 & +0.0129 & 0.01466 & 0.02490 & 0.02160 \\
 & MSE & +0.3274 & +0.1065 & +0.0211 & +0.0328 & 0.01489 & 0.02400 & 0.02220 \\
 & \cellcolor{green!15} \textbf{VCA} & \cellcolor{green!15} +0.3122 & \cellcolor{green!15} +0.0842 & \cellcolor{green!15} +0.0246 & \cellcolor{green!15} +0.0182 & \cellcolor{green!15} 0.01417 & \cellcolor{green!15} 0.02130 & \cellcolor{green!15} 0.02020 \\
\midrule
\multirow{4}{*}{\rotatebox[origin=c]{90}{CSI500}} & Frozen & +0.8800 & +0.2900 & +0.1163 & +0.0429 & 0.01583 & 0.03110 & 0.02680 \\
 & CE & +0.8068 & -0.0934 & +0.0007 & -0.0420 & 0.01478 & 0.03150 & 0.02740 \\
 & MSE & +0.8300 & +0.0201 & +0.0198 & +0.0322 & 0.01541 & 0.03180 & 0.03010 \\
 & \cellcolor{green!15} \textbf{VCA} & \cellcolor{green!15} +0.8370 & \cellcolor{green!15} +0.0666 & \cellcolor{green!15} +0.0461 & \cellcolor{green!15} +0.0532 & \cellcolor{green!15} 0.01452 & \cellcolor{green!15} 0.03010 & \cellcolor{green!15} 0.03300 \\
\midrule
\multirow{4}{*}{\rotatebox[origin=c]{90}{Crypto}} & Frozen & +0.0440 & +0.0008 & -0.0088 & -0.0159 & 0.01938 & 0.00105 & 0.00089 \\
 & CE & +0.0390 & -0.0005 & -0.0053 & -0.0029 & 0.01692 & 0.00101 & 0.00085 \\
 & MSE & +0.0468 & +0.0051 & +0.0095 & +0.0019 & 0.01872 & 0.00102 & 0.00096 \\
 & \cellcolor{green!15} \textbf{VCA} & \cellcolor{green!15} +0.0485 & \cellcolor{green!15} +0.0078 & \cellcolor{green!15} -0.0049 & \cellcolor{green!15} -0.0042 & \cellcolor{green!15} 0.01754 & \cellcolor{green!15} 0.00099 & \cellcolor{green!15} 0.00097 \\
\bottomrule
\end{tabular}

}
\end{table}


\begin{table}[!htb]
\centering
\small
\caption{Raw per-cell results at $H{=}16$ (Chronos). $\sigma^2$-MAE columns are reported to 5 decimal places, 1 for the Chronos-VCA rows; all other columns to 4.}
\label{tab:chronos_raw_h16}
\resizebox{\textwidth}{!}{%
\begin{tabular}{llccccccc}
\toprule
DS & Method & price RankIC & price IC & ret.\ RankIC & ret.\ IC & ACF$^2$ gap & $\sigma^2$-MAE\textsubscript{\textsc{fore}} & $\sigma^2$-MAE\textsubscript{\textsc{vol}} \\
\midrule
\multirow{4}{*}{\rotatebox[origin=c]{90}{CSI300}} & Frozen & +0.0181 & +0.0189 & +0.0221 & +0.0159 & 0.03561 & 0.00745 & 0.00650 \\
 & CE & +0.0053 & +0.0055 & +0.0498 & +0.0397 & 0.03556 & 0.00761 & 0.00670 \\
 & MSE & +0.0215 & +0.0220 & +0.0654 & +0.0610 & 0.03551 & 0.00774 & 0.00703 \\
 & \cellcolor{green!15} \textbf{VCA} & \cellcolor{green!15} -0.0269 & \cellcolor{green!15} -0.0265 & \cellcolor{green!15} -0.0059 & \cellcolor{green!15} +0.0092 & \cellcolor{green!15} 0.03369 & \cellcolor{green!15} 123.0 & \cellcolor{green!15} 189.0 \\
\midrule
\multirow{4}{*}{\rotatebox[origin=c]{90}{CSI500}} & Frozen & +0.0537 & +0.0589 & +0.0451 & +0.0206 & 0.03777 & 0.00994 & 0.00868 \\
 & CE & +0.0351 & +0.0425 & +0.0740 & +0.0601 & 0.03736 & 0.01000 & 0.00883 \\
 & MSE & -0.0730 & -0.0698 & +0.0763 & +0.0504 & 0.03709 & 0.01020 & 0.00932 \\
 & \cellcolor{green!15} \textbf{VCA} & \cellcolor{green!15} +0.0047 & \cellcolor{green!15} +0.0193 & \cellcolor{green!15} +0.0474 & \cellcolor{green!15} +0.0488 & \cellcolor{green!15} 0.03691 & \cellcolor{green!15} 26.4 & \cellcolor{green!15} 47.2 \\
\midrule
\multirow{4}{*}{\rotatebox[origin=c]{90}{Crypto}} & Frozen & -0.0017 & -0.0023 & -0.0145 & -0.0165 & 0.03828 & 0.00027 & 0.00025 \\
 & CE & +0.0148 & +0.0155 & +0.0150 & +0.0113 & 0.03827 & 0.00028 & 0.00024 \\
 & MSE & +0.0110 & +0.0117 & +0.0152 & +0.0089 & 0.03824 & 0.00029 & 0.00024 \\
 & \cellcolor{green!15} \textbf{VCA} & \cellcolor{green!15} +0.0046 & \cellcolor{green!15} +0.0053 & \cellcolor{green!15} +0.0083 & \cellcolor{green!15} +0.0070 & \cellcolor{green!15} 0.03814 & \cellcolor{green!15} 0.00029 & \cellcolor{green!15} 0.00025 \\
\bottomrule
\end{tabular}
}
\end{table}

\begin{table}[!htb]
\centering
\small
\caption{Raw per-cell results at $H{=}32$ (Chronos). $\sigma^2$-MAE columns are reported to 5 decimal places, 1 for the Chronos-VCA rows; all other columns to 4.}
\label{tab:chronos_raw_h32}
\resizebox{\textwidth}{!}{%
\begin{tabular}{llccccccc}
\toprule
DS & Method & price RankIC & price IC & ret.\ RankIC & ret.\ IC & ACF$^2$ gap & $\sigma^2$-MAE\textsubscript{\textsc{fore}} & $\sigma^2$-MAE\textsubscript{\textsc{vol}} \\
\midrule
\multirow{4}{*}{\rotatebox[origin=c]{90}{CSI300}} & Frozen & +0.0042 & +0.0040 & +0.0223 & +0.0200 & 0.01808 & 0.01630 & 0.01410 \\
 & CE & +0.0195 & +0.0192 & +0.0873 & +0.0933 & 0.01779 & 0.01660 & 0.01500 \\
 & MSE & +0.0118 & +0.0124 & +0.0721 & +0.0493 & 0.01763 & 0.01670 & 0.01530 \\
 & \cellcolor{green!15} \textbf{VCA} & \cellcolor{green!15} -0.0173 & \cellcolor{green!15} -0.0197 & \cellcolor{green!15} +0.0058 & \cellcolor{green!15} -0.0035 & \cellcolor{green!15} 0.01719 & \cellcolor{green!15} 21.7 & \cellcolor{green!15} 51.7 \\
\midrule
\multirow{4}{*}{\rotatebox[origin=c]{90}{CSI500}} & Frozen & +0.0534 & +0.0458 & +0.0990 & +0.0622 & 0.01798 & 0.02180 & 0.01900 \\
 & CE & -0.0142 & -0.0127 & +0.0856 & +0.0922 & 0.01768 & 0.02190 & 0.01970 \\
 & MSE & -0.0757 & -0.0755 & +0.0758 & +0.0420 & 0.01759 & 0.02220 & 0.02060 \\
 & \cellcolor{green!15} \textbf{VCA} & \cellcolor{green!15} +0.1274 & \cellcolor{green!15} +0.1295 & \cellcolor{green!15} +0.0449 & \cellcolor{green!15} +0.0474 & \cellcolor{green!15} 0.01766 & \cellcolor{green!15} 0.02190 & \cellcolor{green!15} 0.01930 \\
\midrule
\multirow{4}{*}{\rotatebox[origin=c]{90}{Crypto}} & Frozen & +0.0049 & +0.0051 & -0.0160 & -0.0212 & 0.01764 & 0.00060 & 0.00046 \\
 & CE & +0.0162 & +0.0147 & -0.0131 & -0.0162 & 0.01745 & 0.00064 & 0.00046 \\
 & MSE & -0.0036 & -0.0048 & -0.0024 & -0.0045 & 0.01752 & 0.00063 & 0.00047 \\
 & \cellcolor{green!15} \textbf{VCA} & \cellcolor{green!15} +0.0011 & \cellcolor{green!15} -0.0012 & \cellcolor{green!15} +0.0027 & \cellcolor{green!15} -0.0021 & \cellcolor{green!15} 0.01728 & \cellcolor{green!15} 15.0 & \cellcolor{green!15} 93.4 \\
\bottomrule
\end{tabular}
}
\end{table}

\end{document}